\DeclareMathOperator{\nulls}{\varnothing}
\DeclareMathOperator{\myemptylist}{\ensuremath{[\,]}}
\DeclareMathOperator{\mysi}{\sharp}
\DeclareMathOperator{\myunk}{\texttt{<unk>}}
\theoremstyle{definition}
\newtheorem{lemma}{Lemma}
\newcommand{\mytxt}[1]{\texttt{#1}}
\newcommand{\mytxts}[1]{\mytxt{\#\##1}}
\newcommand{\mywhitespace}{\texttt{\char32}}
\newcommand{\mybm}[2]{\begin{bmatrix}#1 \\ #2\end{bmatrix}}
\theoremstyle{definition}
\newtheorem{Ddef}{Definition} %
\newenvironment{definition}    %
  {%
   \pushQED{\qed}\begin{Ddef}}
  {\popQED\end{Ddef}}
\theoremstyle{remark}
\newtheorem{XxmpX}{\normalfont \textbf{Example}} %
\newenvironment{example}    %
  {%
   \pushQED{\qed}\begin{XxmpX}}
  {\popQED\end{XxmpX}}
\newcommand\mykw[1]{\normalfont \textbf{#1}}
\newcommand{\mysetind}{\SetInd{0.3em}{0.3em}}
\let\oldnl\nl%
\newcommand{\nonl}{\renewcommand{\nl}{\let\nl\oldnl}}%
\newcommand{\eat}[1]{}
\newcommand{\lmmshort}{LinMaxMatch}
\newcommand{\etoeshort}{E2E~WordPiece}
\newcommand{\isatwordboundary}{IsWdBndry}
\title{Fast WordPiece Tokenization}
\newcommand{\authorsep}{\hspace{1em}}
\author{Xinying Song$^\dag$ \authorsep Alex Salcianu$^\dag$ \authorsep Yang Song$^\ddag$\Thanks{\hspace{0.1em} Research conducted while working at Google.} \authorsep Dave Dopson$^\dag$ \authorsep  Denny Zhou$^\dag$ 
\\ $^\dag$Google Research, Mountain View, CA \\ $^\dag$\texttt{\{xysong,salcianu,ddopson,dennyzhou\}@google.com} \\
$^\ddag$Kuaishou Technology, Beijing, China 
\\ $^\ddag$\texttt{yangsong@kuaishou.com}
}
\date{}
\begin{document}

\maketitle
\begin{abstract} 
Tokenization is a fundamental preprocessing step for almost all NLP tasks. In this paper, we propose efficient algorithms for the WordPiece tokenization used in BERT, from single-word tokenization to general text (e.g., sentence) tokenization. When tokenizing a single word, WordPiece uses a longest-match-first strategy, known as maximum matching. The best known algorithms so far are $O(n^2)$ (where $n$ is the input length) or $O(nm)$ (where $m$ is the maximum vocabulary token length). We propose a novel algorithm whose tokenization complexity is strictly $O(n)$.  Our method is inspired by the Aho-Corasick algorithm. We introduce additional linkages on top of the trie built from the vocabulary, allowing smart transitions when the trie matching cannot continue.  For general text, we further propose an algorithm that combines pre-tokenization (splitting the text into words) and our linear-time WordPiece method into a single pass. Experimental results show that our method is 8.2x faster than HuggingFace Tokenizers and 5.1x faster than TensorFlow~Text on average for general text tokenization.
\end{abstract}

\section{Introduction}

Tokenization is the process of splitting text into smaller units called tokens
(e.g., words).  It is a fundamental preprocessing step for almost all NLP
applications: sentiment analysis, question answering, machine translation, information retrieval, etc.

Modern NLP models like BERT~\cite{devlin-etal-2019-bert}, GPT-3~\cite{gpt-3},
and XLNet~\cite{yang-xlnet-2019} tokenize text into subword units~\cite{schuster-nakajima-japanese,sennrich-etal-2016-neural,kudo-2018-subword}.  As a
midpoint between words and characters, subword units retain linguistic meaning
(like morphemes), while alleviating out-of-vocabulary situations even with a
relatively small-size vocabulary.

In this paper, we propose efficient algorithms for WordPiece, the subword
tokenization used in BERT~\cite{devlin-etal-2019-bert}.  Given Unicode text that
has already been cleaned up and normalized, WordPiece has two steps: (1)
pre-tokenize the text into words (by splitting on punctuation and whitespaces),
and (2) tokenize each word into wordpieces.

For single-word tokenization, WordPiece uses a greedy longest-match-first strategy: iteratively pick the longest prefix of the remaining text that matches a vocabulary token. This is well-known as Maximum Matching or 
MaxMatch~\cite{handbook-nlp}, which 
has also been used for Chinese word segmentation since 1980s~\cite{liu-liang-1986}. 

Despite its wide use in NLP for decades, to the best of our knowledge, the most efficient MaxMatch algorithms so far are $O(n^2)$ (where $n$ is the input word length) or $O(nm)$ (where $m$ is the maximum vocabulary token length) (see Section~\ref{sec:related}).  It's worth noting that the latter has a vocabulary-specific multiplicative factor $m$, which can be large when the vocabulary contains long words.

We propose \lmmshort{}, a novel MaxMatch algorithm for WordPiece tokenization, whose tokenization time is strictly $O(n)$ without any vocabulary-specific multiplicative factors.
Inspired by the Aho-Corasick algorithm~\cite{aho-corasick-1975-efficient}, we organize vocabulary tokens in a trie~\cite{trie-first-reference} and introduce precomputed \textbf{failure links} and \textbf{failure pops}. During tokenization, if an input character does not match any trie edge, we perform smart transitions  
to avoid backtracking to earlier input characters.
This involves collecting the recognized tokens (i.e., failure pops) and moving to a trie node (via the failure link), from where we continue to match the same character (Section~\ref{sec:lmm}).  

For general text tokenization, referred to as end-to-end tokenization in this paper,
we propose \etoeshort{}, an end-to-end algorithm that combines pre-tokenization and WordPiece tokenization into a single, linear-time pass (Section~\ref{sec:e2ewp}). 

Experimental results show that our method is 8.2x faster than HuggingFace Tokenizers ~\cite{huggingface-whole-package} and 5.1x faster than TensorFlow~Text~\cite{tftext-whole-package} on average for general text tokenization (Section~\ref{sec:experiments}). 

Although tokenization is relatively faster than other steps, it's still worth improving the performance: Tokenization is a prerequisite step for almost all NLP tasks, and any improvement on its efficiency helps reduce the latency of the entire inference. 
One potential impact of the work, for example, is on mobile NLP applications. On-device models are generally highly optimized for reducing latency, e.g., by distilling or compressing larger models. Thus, the impact of tokenization can be significant here.
Another impact is on aggregate computational savings for Web services like Google, Facebook, Twitter, etc. For example, Google uses BERT to power its Web search nowadays.\footnote{\url{https://blog.google/products/search/search-language-understanding-bert/}} Google serves billions of search queries per day, and it processes hundreds of trillions of Web pages in index building. By employing a faster tokenization system, the aggregate computational savings would be material, which also benefits the environment (for less power consumption). 

This paper also makes a theoretical contribution. The proposed \lmmshort{} algorithm solves the decades-old MaxMatch problem in the optimal $O(n)$ time, and the idea is applicable to other string matching or rewriting problems (Section~\ref{sec:algo-discussions}).

The code will be available at \url{https://www.tensorflow.org/text}.

\section{Related Work}
\label{sec:related}

Maximum Matching (or MaxMatch) has been used for Chinese word segmentation (CWS) since the 1980s~\cite{liu-liang-1986, %
handbook-nlp}. Recent CWS work focuses on machine learning-based segmentation approaches, but MaxMatch remains a commonly referenced baseline~%
\cite{chang-etal-2008-optimizing}.

More recently, subword tokenization techniques have become a near-universal feature of modern NLP models, including BERT~\cite{devlin-etal-2019-bert}, 
GPT-3~\cite{gpt-3}, 
XLNet~\cite{yang-xlnet-2019}, etc.
Common subword tokenization techniques include Byte-Pair Encoding (BPE)~\cite{schuster-nakajima-japanese, sennrich-etal-2016-neural}, SentencePiece~\cite{kudo-2018-subword} (based on unigram language modeling), and WordPiece~\cite{bertoss}.

The widely-adopted MaxMatch algorithm, which is used in the original WordPiece algorithm~\cite{bertoss}, starts from the longest possible prefix and decrements the length in search of the longest-matching token~\cite{jie-liu-liang-1989}.  
A variant starts from the shortest substring and increases the length~\cite{webster-kit-1992-tokenization, reps-1998,sassano-2014-deterministic}.
The worst-case time complexity of the previous algorithms are $O(n^2)$ or $O(nm)$ or even higher than that.\footnote{The exact complexity depends on implementation details, e.g., whether substring hashes are computed from scratch or incrementally, how substrings are searched in vocabulary, etc. 
}\footnote{Previous studies usually do not explicitly state the vocabulary-related multiplicative factor in the complexity, or just treat it as a hidden constant.} 
For example, the complexity of \citet{sassano-2014-deterministic} is $O(nm)$ (in our notations), since \texttt{Lookup(t,c,i,N)} (Figure 1 in their paper) may take $O(m)$ time (which is similar to the analysis in Section~\ref{sec:idea} of this paper).
\citet{reps-1998} recognizes maximum matching tokens using regular expressions in the context of compilers; their complexity is $O(|Q|n)$, where $|Q|$ is the number of states in the automaton built from the grammar/vocabulary. If applied to WordPiece tokenization, since vocabulary tokens are finite strings, their complexity can be refined as $O(nm)$.

Our algorithm is inspired by the Aho-Corasick algorithm~\cite{aho-corasick-1975-efficient}, but the two algorithms are designed to address different problems. 
Aho-Corasick is not optimal for the MaxMatch problem. %
In the worst-case scenario where every substring in the input matches a vocabulary token, Aho-Corasick finds a quadratic number of matches, resulting in an overall quadratic complexity for MaxMatch.
By comparison, our algorithm achieves the worst-case linear complexity for MaxMatch due to a novel definition of failure links, the newly-introduced failure pops, as well as a different way of emitting tokens.

It's worth clarifying the difference between our failure links and the tabular solution of \citet{reps-1998}. In their work, a table called \texttt{failed\_previously} is used to store whether a state \texttt{<q,i>} has been seen before in a failed attempt to match a token (where \texttt{q} is a state of the automaton and \texttt{i} is a position of the input). \citet{reps-1998} uses that table to avoid wasteful revisits of the same state. The table entries \texttt{<q,i>} depend on both the grammar/vocabulary and the actual input. In contrast, our failure links capture which state to transit to when trie matching cannot continue (Definition~\ref{def:failure-main}), and they are precomputed based on the vocabulary only, independent of the input. 

Finally, we discuss the complexity of algorithms for Byte-Pair Encoding (BPE)~\citep{schuster-nakajima-japanese, sennrich-etal-2016-neural} and SentencePiece~\citep{kudo-2018-subword}.
Note that they are different problems from MaxMatch (the topic of this paper). SentencePiece is based on unigram language modeling, and the optimal segmentation can be found in $O(nm)$ time with the Viterbi algorithm~\citep{viterbi-1967}.
BPE algorithms can be implemented in two ways. One is to enumerate the symbol pairs in the order that they were added to the vocabulary in the building phase. For each symbol pair, we scan the current sequence and replace all their occurrences with the merged symbol. The complexity is $O(|V|n)$, where $|V|$ is the size of the vocabulary. 
The other approach is to repeatedly select the pair of symbols from the current sequence that has the highest priority (e.g., the maximum frequency). Using a heap, this approach can be done in $O(nlogn)$.

\section{Linear-Time Single-Word Tokenization} %
\label{sec:lmm}
In this section, we present \lmmshort{}, an $O(n)$ algorithm
for single-word WordPiece tokenization.

\subsection{Background and Notations} %
\label{sec:background}

Given a vocabulary,\footnote{The construction of the 
vocabulary is outside the scope of this paper.  We refer the interested reader
to~\citet{tftext-whole-package}.} WordPiece tokenizes a word using the MaxMatch
approach: iteratively pick the longest prefix of the remaining text that
matches a vocabulary token until the entire word is segmented. 
If a word cannot be tokenized, the entire word is mapped to a special
    token $\myunk$.

WordPiece tokenization distinguishes wordpieces at the start of a word
    from wordpieces starting in the middle.  The latter start with a special symbol
    \mytxts{} (in BERT), which is called the \textbf{suffix indicator} and is denoted as
    $\mysi$ in this paper. Our method works with any suffix indicator: \mytxts{}, an arbitrary string, or the empty string (i.e., no distinction between the two kinds of wordpieces).

For example, the word \mytxt{johanson} may be tokenized as [\mytxt{johan},
\mytxts{son}].

We use the running example from Figure~\ref{fig:example}.
Table~\ref{tab:notation} summarizes our notations.
We construct a trie from the vocabulary $V$.  We use $\delta(u,c)=v$ to denote a
trie edge from node $u$ to node $v$ with character $c$ as the label.  If there
is no outgoing edge from $u$ with label $c$, $\delta(u,c)=\nulls$.
Let $\chi_v$ be the string represented by the node $v$, that is, the string
obtained by concatenating all the edge labels along the path from the root to
node $v$.
Let $r$ be the root of the trie and $r_{\mysi}$ be the node for the suffix
indicator $\mysi$. Obviously, $\chi_r = \varepsilon$ (where $\varepsilon$
denotes the empty string) and $\chi_{r_{\mysi}} = \mysi$.
The depth of node $v$ is defined as the number of characters in $\chi_v$ excluding the suffix indicator prefix (if any). Hence, the depth of $r$ or $r_{\mysi}$ is 0. In Figure~\ref{fig:graph3}, nodes 0 and 2 have depth 0, nodes 1, 3, and 8 have depth 1, node 10 has depth 2, etc.

\begin{table}[htb!]
    \centering
\newcommand\myoptional[1]{#1}
\renewcommand\myoptional[1]{}   %
\setlength\tabcolsep{2pt} %
\begin{tabular}{ll}
\textbf{Symbol} \myoptional{& \textbf{Section}} & \textbf{Meaning} \\
\hline
    $\varepsilon$ \myoptional{& \ref{sec:problem}} &  The empty string \\
    $\mysi$ \myoptional{& \ref{sec:problem}} & The suffix indicator string \\
    $V$ \myoptional{& \ref{sec:problem}} & The vocabulary \\
    $\myunk$ \myoptional{& \ref{sec:problem}} & The unkown token \\
    $w,s$ \myoptional{& \ref{sec:problem}} & A string \\
    $c$ \myoptional{& \ref{sec:concepts}} & A character \\
    $\mywhitespace$ \myoptional{& \ref{sec:concepts}} & A whitespace character \\ 
    $r, r_{\mysi}$ \myoptional{& \ref{sec:concepts}} & The trie root and the node for $\mysi$ \\
    $u, v$ \myoptional{& \ref{sec:concepts}} & Trie nodes; $u$ is often the parent of $v$ \\
    $\nulls$ \myoptional{& \ref{sec:concepts}} & Null node \\
    $\delta(u,c)$ \myoptional{& \ref{sec:concepts}} & Trie edge from node $u$, with label $c$ \\
    $\chi_v$ \myoptional{& \ref{sec:concepts}} & The string represented by node $v$ \\
    $f(v), F(v)$ \myoptional{& \ref{sec:concepts}} & Failure link and failure pops\\ 
    $n$ \myoptional{& \ref{sec:problem}} & The length of the input \\ 
    $m$ \myoptional{& \ref{sec:problem}} & The maximum length of tokens in $V$ \\ 
    $M$ \myoptional{& \ref{sec:problem}} & The sum of the lengths of tokens in $V$ \\ 
\end{tabular}
\caption{Notations.}
\label{tab:notation}
\end{table}
\hspace{-10em}

\subsection{Intuition}
\label{sec:idea}

\begin{figure*}[htb!]
\begin{center}
\begin{subfigure}[b]{0.48\textwidth}
\setlength\tabcolsep{6pt} %
\begin{tabular}{|c|}
\hline
$\pmb{V}$: $\left\{ \mytxt{a}, \, \mytxt{abcdx},  \, \mytxts{b},  \, \mytxts{c}, \, \mytxts{cdy}, \,  \mytxts{dz} \right\}$ \\
\hline
\end{tabular}
\includegraphics[width=1.0\linewidth]{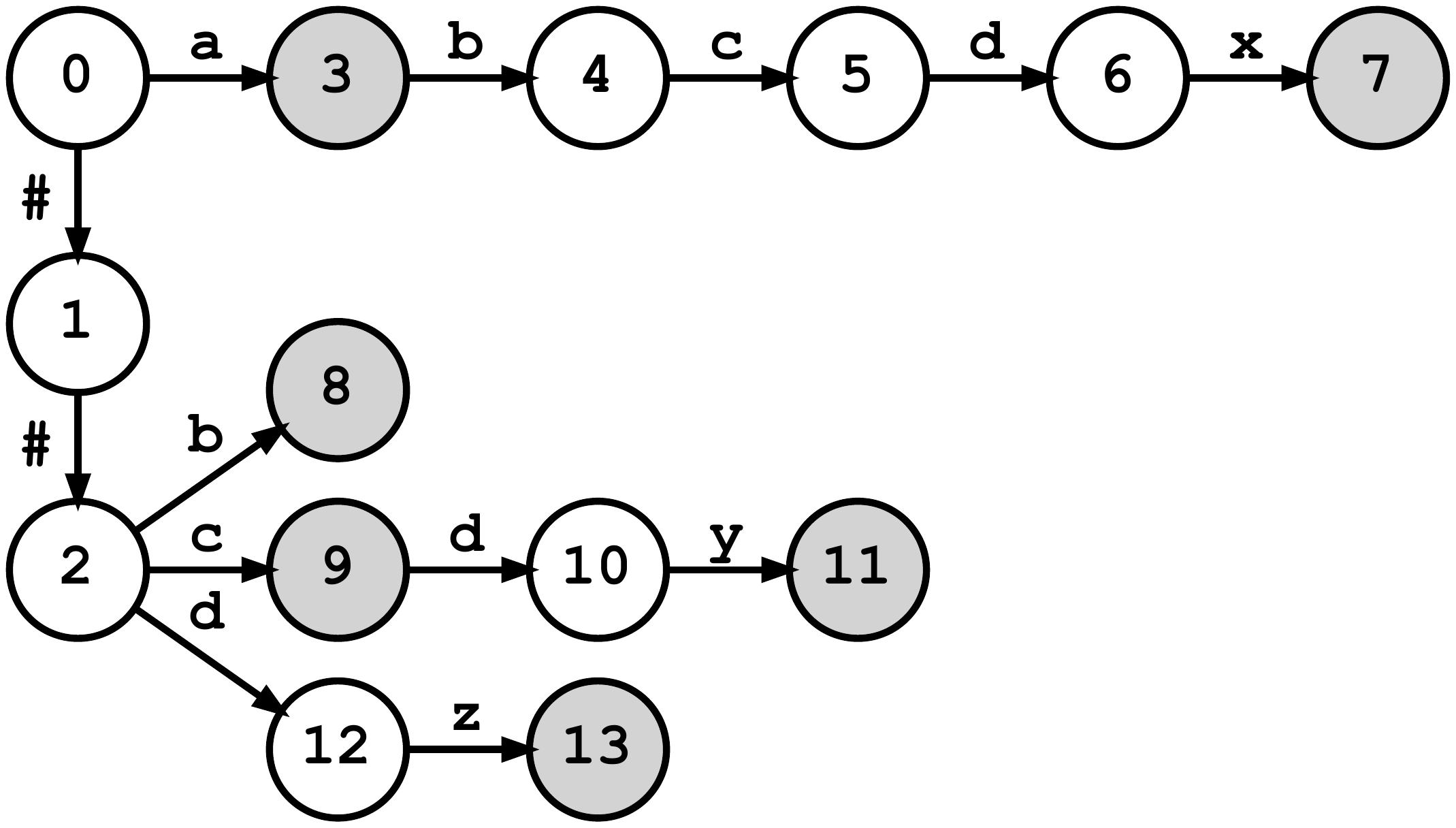}\llap{\includegraphics[width=0.4\linewidth]{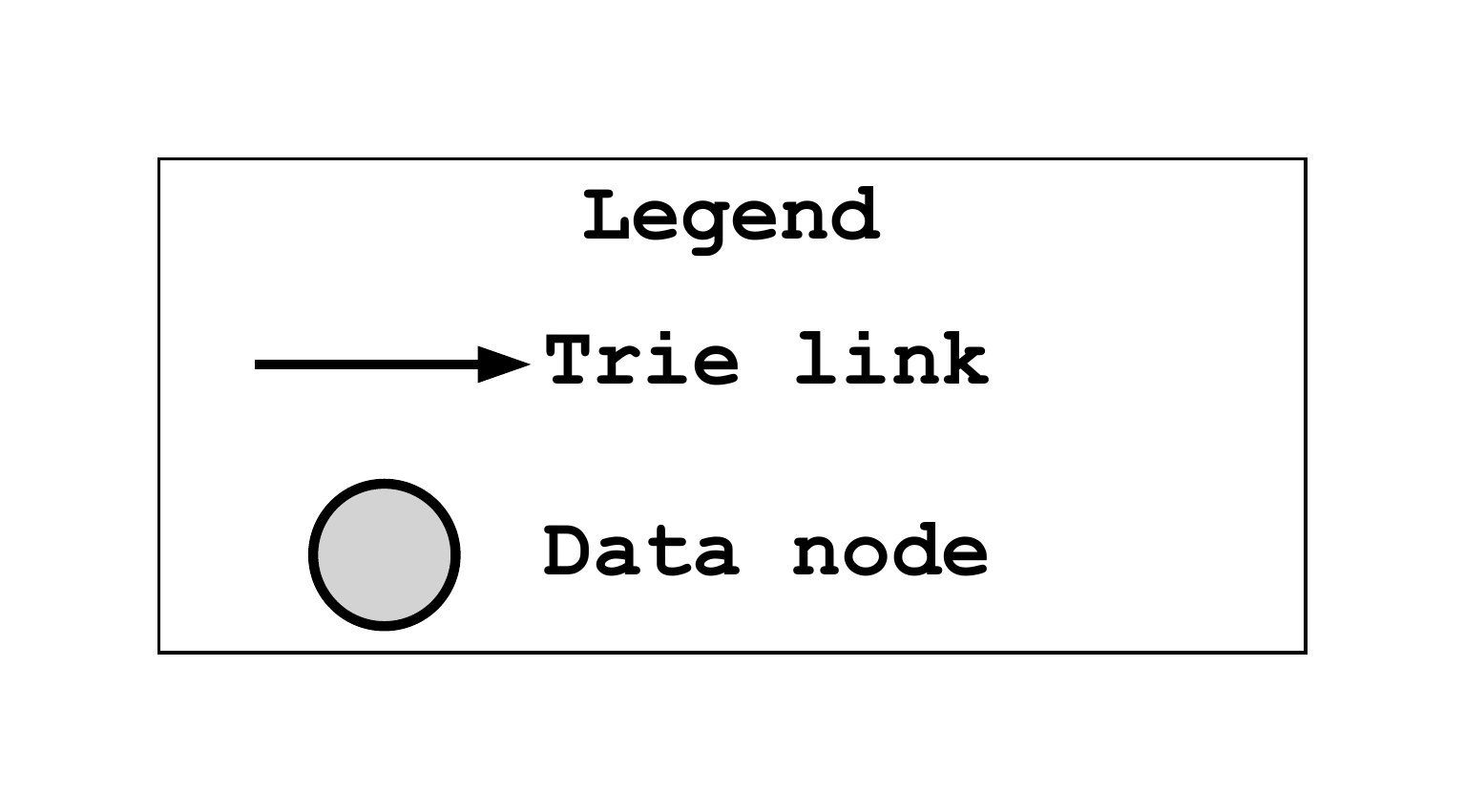}}
\caption{The vocabulary and the corresponding trie.}
\label{fig:graph3}
\end{subfigure}
\quad
\begin{subfigure}[b]{0.48\textwidth}
\setlength\tabcolsep{1pt} %
\begin{tabular}{c|ccccccccccc}
    $\pmb{v}$&  \textbf{0} & \textbf{1} & \multicolumn{2}{c}{\textbf{2}} \\
    \hline
    $\pmb{F(v)}$& $\myemptylist$ & $\myemptylist$ & \multicolumn{2}{c}{$\myemptylist$} \\ %
    $\pmb{f(v)}$& $\nulls$ & $\nulls$ & \multicolumn{2}{c}{$\nulls$} \\ %
    \hline
    \hline
    $\pmb{v}$&  \textbf{3} & \textbf{4} & \multicolumn{3}{c}{\textbf{5}} & \multicolumn{3}{c}{\textbf{6}}  & \multicolumn{3}{c}{\textbf{7}} \\
    \hline
    $\pmb{F(v)}$& [\mytxt{a}] & [\mytxt{a}] & \multicolumn{3}{c}{[\mytxt{a}, \mytxts{b}]} & \multicolumn{3}{c}{[\mytxt{a}, \mytxts{b}]} & \multicolumn{3}{c}{[\mytxt{abcdx}]} \\
    $\pmb{f(v)}$& 2 & 8 & \multicolumn{3}{c}{9} & \multicolumn{3}{c}{10} & \multicolumn{3}{c}{2} \\
    \hline
    \hline
    $\pmb{v}$ & \textbf{8} & \textbf{9} &  \multicolumn{2}{c}{\textbf{10}} & \multicolumn{3}{c}{\textbf{11}} & \multicolumn{2}{c}{\textbf{12}} & \multicolumn{2}{c}{\textbf{13}}  \\
    \hline
    $\pmb{F(v)}$ & [\mytxts{b}] & [\mytxts{c}] & \multicolumn{2}{c}{[\mytxts{c}]} & \multicolumn{3}{c}{[\mytxts{cdy}]} & \multicolumn{2}{c}{$\myemptylist$} & \multicolumn{2}{c}{[\mytxts{dz}]}  \\
    $\pmb{f(v)}$ & 2 & 2 & \multicolumn{2}{c}{12} & \multicolumn{3}{c}{2} & \multicolumn{2}{c}{$\nulls$} & \multicolumn{2}{c}{2}\\
\end{tabular}
\caption{Complete table of $f(v)$ and $F(v)$. }
\label{fig:F}
\end{subfigure}
\end{center}
\caption{Example vocabulary, the corresponding trie, and the table of auxiliary links and data. The suffix indicator is \mytxts{}. %
Node 0 is the root node. Data nodes (in grey) indicate vocabulary tokens, i.e., the represented string is in $V$.}
\label{fig:example}
\end{figure*}

To motivate our linear algorithm, let's first consider an alternative approach to MaxMatch using a simple vocabulary trie: 
when searching the longest token at a position, it starts from the shortest substring and iterates over the input text from left to right, following trie matching to find the longest prefixes that matches a vocabulary token.  %

\begin{example}
\label{example:motivation}
Consider the vocabulary and the trie from Figure~\ref{fig:graph3}, with the input string \mytxt{abcdz}. The expected output is $[\mytxt{a}, \mytxts{b}, \mytxts{c}, \mytxts{dz}]$.

Starting from position 0, we follow the trie edges to match the input characters from \mytxt{a} to \mytxt{d}, arriving at node 6. No trie edge exits node 6 with character \mytxt{z} as the label. The longest matching prefix seen so far 
is \mytxt{a}, which is the first recognized token.  
\end{example}

The challenge of this approach is that, when the trie fails to match the next character, the longest vocabulary token match may be several characters back. As shown in Example~\ref{example:motivation}, from position 0 we've matched the prefix \mytxt{abcd} 
but found that the longest matching token is \mytxt{a}.  When looking for the next token, we reset the start position at character \mytxt{b} and reprocess \mytxt{bcd..}, resulting in repetitive and wasteful iterations. The time complexity is $O(nm)$.

The idea of \lmmshort{} is to use precomputed information to avoid reprocessing the characters. %

\begin{example}
\label{example:motivation2}
    For the same example as above, when the trie matching fails at 
    character \mytxt{z}, since \mytxt{abcd} has been matched, given the vocabulary in use (Figure~\ref{fig:graph3}), we should be able to know that the first two
    longest-matching tokens are $[\mytxt{a}, \mytxts{b}]$. After collecting the tokens, we should reset our state as if we just matched \mytxts{cd} and then continue to match the same character \mytxt{z}. No need to reprocess \mytxt{bcd}.
\end{example}

Specifically, when trie matching arrives at node $v$ but cannot continue further, it must have matched the string represented by $v$ (i.e.\ $\chi_v$). We consider the tokens that MaxMatch would generate for the beginning of $\chi_v$  (called ``failure pops'' $F(v)$), which should be popped off the beginning of $\chi_v$ and put into the result. After that, we should transit to a state (following the ``failure link'' $f(v)$) that corresponds to the remaining suffix of $\chi_v$, from which the algorithm continues to match the next character.  $F(v)$ and $f(v)$ are defined as below and can be precomputed based on the vocabulary.

\begin{definition}{\textbf{Failure links and pops}}.
\label{def:failure-main}
Given a node $v$ and the corresponding string $\chi_v$, consider the shortest non-empty list of longest-matching-prefix tokens $[p_1,p_2,...,p_k]$ (where $p_i \in V$, $p_i \neq \varepsilon$ or $\mysi$,  for $1 \leq i \leq k$) that we can remove from $\chi_v$ (in order) until the remaining suffix can be represented by some node $v'$ from the trie. 

We define \textbf{failure pops} for node $v$ as $F(v)=[p_1,p_2,...,p_k]$ and \textbf{failure link} as $f(v)=v'$.

If such a non-empty list $[p_1,p_2,...,p_k]$ does not exist, we define $f(v)=\nulls$.  
$F(v)$ is undefined and unused in this case.
\end{definition}

Put it another way, $F(v)$ and $f(v)$ are defined by finding the longest prefix of the string $\chi_v$ that matches a vocabulary token, popping it, and repeating this procedure until the suffix string is found on the trie. 
Figure~\ref{fig:F} shows $F(v)$ and $f(v)$ computed for the example vocabulary and trie.

For readers with the background of finite-state transducers (FSTs) \cite{mohri-1997-finite}, it's helpful to see that $f(v)$ is related to the state transition function and $F(v)$ is related to the output function (more discussions in Section~\ref{sec:algo-discussions}).

\subsection{\lmmshort{} Tokenization}
\label{sec:inference}

Assume that, based on the vocabulary, we have precomputed the trie, failure links, and failure pops (precomputation is discussed in Section~\ref{sec:initialization}). Given an input string, we follow the trie edges to process the input characters one by one. When trie matching cannot continue from node $v$, we make a \textbf{failure transition} in two steps: (1) retrieve failure pops $F(v)$ and append to the end of tokenization result, and (2) follow the failure link to node $f(v)$.
After that, we continue from the new node $f(v)$. 

Algorithm~\ref{alg:inference} shows the tokenization algorithm. For now, ignore lines~\ref{alg:inference:check_edge_case}-\ref{alg:inference:edge_case}; we explain it later.

\begin{algorithm}[htb!]
\DontPrintSemicolon
\mysetind
\SetKwProg{Fn}{Function}{:}{}
\nonl
\Fn{\textsc{\lmmshort{}}{\normalfont (}$w${\normalfont )}}{
  \text{tokens}, $u$, $i$ $\leftarrow$ \textsc{MatchLoop}$(w\mywhitespace{}, 0)$ \label{alg:inference:call} \;
  \If{$i<|w|$ \mykw{or} $u \notin \{r, r_{\mysi}\}$ \label{alg:inference:check_returns}}{tokens $\leftarrow [\myunk]$ \label{alg:inference:reset}}
    \ElseIf{
    $
    u = r_{\mysi}  
      \mykw{ and } |\text{tokens}|=0 $ \label{alg:inference:check_edge_case}}{
        tokens $\leftarrow \textsc{OriginalWordPiece}(\mysi)$ \label{alg:inference:edge_case}
    }
  \Return tokens \label{alg:inference:return}
}
\nonl
\Fn{\textsc{MatchLoop}{\normalfont (}$s$, $i${\normalfont )}}{
  $u, \text{tokens}$ $\;\leftarrow\;$ $r, \myemptylist$ \;
  \While{$i < |s|$ \label{alg:inference:step_start}}{
    \While{$\delta(u,s[i]) = \nulls$ \label{alg:inference:failurewhilebegin}} {
        \lIf{$f(u) = \nulls$} { \Return tokens, $u$, $i$ \label{alg:inference:matchloopbreak}}
        tokens $\leftarrow$ \textsc{Extend}(tokens, $F(u)$) \label{alg:inference:append} \;
        $u \leftarrow$ $f(u)$ \label{alg:inference:fv}\;
    } 
    $u \leftarrow$ $\delta(u, s[i])$ \label{alg:inference:goto} \;
    $i \leftarrow i+1$ \label{alg:inference:iplus1}
  }
  \Return tokens, $u$, $i$  \label{alg:inference:final_return}
}
\caption{\lmmshort{} Tokenization}
\label{alg:inference}
\end{algorithm}

The main function calls \textsc{MatchLoop}() with two inputs: $w$ appended by a whitespace \mywhitespace{} and the start position 0 (line~\ref{alg:inference:call}). 
Inside that function, let's use the term \textbf{step} to denote an iteration of the loop on lines~\ref{alg:inference:step_start}-\ref{alg:inference:iplus1}, which processes one input character $s[i]$.
Each step starts from the current node $u$ and follows 
$f(u)$ zero, one, or multiple times (line~\ref{alg:inference:fv}), appending the tokens in $F(u)$ to the result along the way (line~\ref{alg:inference:append}), until it finds a trie edge that matches the current character (line~\ref{alg:inference:failurewhilebegin}) or $f(u)=\nulls$ (line~\ref{alg:inference:matchloopbreak}).

If the input string $w$ can be tokenized, the loop continues until $i\!=\!|s|\!-\!1$ pointing to the final appended whitespace. We know that $\delta(u, \mywhitespace{})=\nulls$ for any $u$ (since whitespace is not in any vocabulary token). \textsc{MatchLoop}() will keep following $f(u)$ while collecting $F(u)$ tokens along the way (line~\ref{alg:inference:append}-\ref{alg:inference:fv}) until it arrives at $r_{\mysi}$, where $f(r_{\mysi})=\nulls$. \textsc{MatchLoop}() returns on line~\ref{alg:inference:matchloopbreak} with $u =r_{\mysi}$, $i\!=\!|s|\!-\!1\!=\!|w|$, and tokens being the expected result (see Example~\ref{example:inference}). If $w=\varepsilon$, \textsc{MatchLoop}() returns immediately with $u=r$, $i\!=\!0\!=\!|w|$, and empty tokens. In either case, the tokens are returned by the main function (line~\ref{alg:inference:return}). 

On the other hand, if the word cannot be tokenized, when \textsc{MatchLoop}() returns on line~\ref{alg:inference:matchloopbreak}, there are two cases: (1) Some normal input character cannot be consumed after attempting failure transitions (i.e., $i\!<\!|w|$). (2) $i\!=\!|w|$ but the final landing node $u \notin \{r, r_{\mysi}\}$ representing a non-empty string $\chi_u$ yet $f(u)=\nulls$; according to Definition~\ref{def:failure-main}, $\chi_u$ cannot be tokenized. In either case, the result tokens are reset to $[\myunk]$ (line~\ref{alg:inference:reset}). See Example~\ref{example:inference2}.

Line~\ref{alg:inference:final_return} is only for safety reasons; it will not be visited since a whitespace is appended at the end. %

\begin{example}
\label{example:inference}
Consider $s=w\mywhitespace{}=\mytxt{abcdz\mywhitespace{}}$, using the vocabulary from Figure~\ref{fig:graph3}. The expected tokenization is $[\mytxt{a}, \mytxts{b}, \mytxts{c}, \mytxts{dz}]$.

\begin{table}[htb]
    \centering
\setlength\tabcolsep{2.2pt} %
\begin{tabular}{rrrlr}
  step & $i, s[i]$ & \multicolumn{2}{r}{node transition} &      result tokens\\
    \hline
       &      &          &   $\;\;\;$  0 &      $\myemptylist$ \\
     1 &   0, \mytxt{a}  & $\delta(0,\mytxt{a})$ & $\rightarrow$  3 &      $\myemptylist$ \\
     2 &   1, \mytxt{b}  & $\delta(3, \mytxt{b})$ & $\rightarrow$    4 &      $\myemptylist$ \\
     3 &   2, \mytxt{c}  & $\delta(4, \mytxt{c})$ & $\rightarrow$    5 &       $\myemptylist$ \\
     4 &   3, \mytxt{d}  & $\delta(5, \mytxt{d})$ & $\rightarrow$    6 &       $\myemptylist$ \\
     5 &   4, \mytxt{z}  & $f(6)$        &  $\rightarrow$  10 &         [\mytxt{a},\mytxts{b}] \\
       &               & $f(10)$        & $\rightarrow$   12 &         [\mytxt{a},\mytxts{b},\mytxts{c}] \\
       &               & $\delta(12, \mytxt{z})$ & $\rightarrow$   13 &         [\mytxt{a},\mytxts{b},\mytxts{c}] \\
     6 &   5, \mywhitespace{} & $f(13)$        &  $\rightarrow$   2 &      [\mytxt{a},\mytxts{b},\mytxts{c}, \mytxts{dz}] \\
       &              & $f(2)=\nulls$ & &      [\mytxt{a},\mytxts{b},\mytxts{c}, \mytxts{dz}] \\
\end{tabular}
\caption{Sequence of node transitions and result tokens.}
\label{tab:inference-example}
\end{table}

Table~\ref{tab:inference-example} shows the sequence of node transitions and result tokens in \textsc{MathLoop}().  The first row is the original state.  Steps 1-4 are self-explanatory.

Step 5 is more complex: when we reach step 5, the prefix \mytxt{abcd} has already been processed. The current node is node 6, and the next character is \mytxt{z}. As $\delta(6, \mytxt{z})=\nulls$, we copy $F(6)$ to the result (which becomes [\mytxt{a}, \mytxts{b}]) and follow $f(6)$ to node 10. Next, as $\delta(10, \mytxt{z})=\nulls$, we copy $F(10)$ to the result (which becomes
[\mytxt{a}, \mytxts{b}, \mytxts{c}]) and follow $f(10)$ 
to node 12.  Now, as $\delta(12, \mytxt{z})=13$, we follow the trie edge to node 13 and proceed to step 6.

Step 6 processes \mywhitespace{}. We first follow $f(13)$ to node 2, appending \mytxts{dz} to the result tokens. Then, at node 2 (i.e., $u=2=r_{\mysi}$), $\delta(u, \mywhitespace{})=\nulls$ and $f(u)=\nulls$. \textsc{MatchLoop}() returns on line~\ref{alg:inference:matchloopbreak}.

Back to the main function (line~\ref{alg:inference:check_returns}), since $i\!=\!5\!=\!|w|$ (meaning that \textsc{MatchLoop}() stopped at the final whitespace) and $u=r_{\mysi}$ (meaning that all matched characters \mytxt{abcd} are covered by the result tokens), the word is successfully tokenized. It returns [\mytxt{a}, \mytxts{b}, \mytxts{c}, \mytxts{dz}] as expected.
\end{example}

\begin{example}
\label{example:inference2}
Consider two input words $s_1\!=\!w_1\mywhitespace{}\!=\!\mytxt{abcz}\mywhitespace{}$, $s_2\!=\!w_2\mywhitespace{}\!=\!\mytxt{abcd}\mywhitespace{}$. Using the same vocabulary, neither $w_1$ nor $w_2$ can be tokenized.

For $s_1$, \textsc{MatchLoop}() consumes \mytxt{abc} but not \mytxt{z}. Hence it stops within the word: $i=3<|w_1|$. 

For $s_2$, \textsc{MatchLoop}() consumes all normal characters \mytxt{abcd} but not the whitespace \mywhitespace{}. When it returns on line~\ref{alg:inference:matchloopbreak}, $i=|w_2|$, $u$ is node 12 (since $f(12)\!=\!\nulls$), and the result tokens are $[\mytxt{a},\mytxts{b},\mytxts{c}]$, which do not cover character \mytxt{d}. Actually, the string \mytxts{d} represented by node 12 cannot be tokenized. 

Tokens are reset to $[\myunk]$ in both cases.
\end{example}

\paragraph{Corner cases}
One behavior of the original WordPiece algorithm~\cite{bertoss} is that, if the input starts with the suffix indicator, the first result token may start with the suffix indicator. For example, in Figure~\ref{fig:example}, if the input is \mytxts{bc}, the tokenization result is [\mytxts{b}, \mytxts{c}]. In this paper, by having $r_{\mysi}$ as a descendant of $r$, \lmmshort{} follows the same behavior and returns the same result. 

Because $r_{\mysi}$ is set as a descendant of $r$, if the input $w$ is 
$\mysi$ itself (e.g., \mytxts{}), normally Algorithm~\ref{alg:inference} would have returned an empty list of tokens, which is inconsistent with \citet{bertoss}.
We handle this as a special case.
Line~\ref{alg:inference:check_edge_case} checks whether $w$ is $\mysi$ by the following (instead of directly comparing the strings): if and only if $w=\mysi$, 
the landing node $u$ is $r_{\mysi}$ and the result tokens are empty after  
consuming all normal input characters (i.e., $i=|w|$)\footnote{Note that $i=|w|$ is satisfied implicitly on line~\ref{alg:inference:check_edge_case} 
(Algorithm~\ref{alg:inference}) since it's an \emph{else} statement following the \emph{if} statement on line~\ref{alg:inference:check_returns}.}.
If so, the tokens are reset by the precomputed result of the original WordPiece algorithm on $\mysi$  (line~\ref{alg:inference:edge_case}).  %

Algorithm~\ref{alg:inference} can be proved to
be consistent with
the original WordPiece algorithm~\cite{bertoss}.

\subsection{\lmmshort{} Precomputation}
\label{sec:initialization}

Given a vocabulary, it is straightforward to build the trie. This section explains how to precompute failure links $f(\cdot)$ and failure pops $F(\cdot)$.

We could compute $f(\cdot)$ and $F(\cdot)$ by directly using the procedure from Definition~\ref{def:failure-main}. 
Instead, we propose a faster algorithm (see Section~\ref{sec:complexity} for complexity).  Our algorithm computes $f(v)$ and $F(v)$ by leveraging $f(u)$ and $F(u)$ from the parent node $u$. Suppose $\delta(u,c)=v$. Intuitively, as the string $\chi_u$ of parent $u$ is a prefix of the string $\chi_v$ of node $v$, it is likely that $F(u)$ and $F(v)$ share some common longest-matching-prefixes in the beginning. It can be proved that when $\chi_v \notin V$, $F(v)$ consists of (1) the tokens from $F(u)$, followed by (2) the longest-matching-prefixes that the procedure from Definition~\ref{def:failure-main} generates for the string $\chi_{f(u)}c$.  Otherwise, when $\chi_v \in V$, it's trivial that $F(v)=[\chi_v]$ based on Definition~\ref{def:failure-main}.  Notice that $f(v)$ and $F(v)$ are computed using similar information for nodes that have strictly smaller depth than $v$.  Breadth-First-Search (BFS) is suitable for the computation.

Algorithm~\ref{alg:init} is the precomputation algorithm. On line~\ref{alg:init-trie}, the algorithm builds a trie for $V$ and keeps track of
$r$ and $r_{\mysi}$. 
These nodes have depth 0 and 
are the starting points for our BFS traversal (line~\ref{alg:init:bfs_queue}).  We assume that initially $f(v)=\nulls$ and $F(v)=\myemptylist$ for every node $v$.  The core part is in lines~\ref{alg:init:core-start}-\ref{alg:init:core-end}, which computes $f(v)$ and $F(v)$ as discussed earlier.

The rest of the algorithm handles technical details.  E.g., if $\mysi$ is the empty string, the nodes $r$ and $r_{\mysi}$ are identical; accordingly, line~\ref{alg:init:bfs_queue} avoids duplicate nodes.  Otherwise, $r_{\mysi}$ is a descendant of $r$, and we need line~\ref{alg:init:skip} to avoid revisiting it in the BFS traversal.

It can be proved that Algorithm~\ref{alg:init} correctly precomputes $f(v), F(v)$ for each trie node $v$.

\begin{algorithm}
\DontPrintSemicolon
\mysetind
\SetKwProg{Fn}{Function}{:}{}
\nonl
\Fn{\textsc{Precompute}{\normalfont (}$V${\normalfont )}}{
  $r,r_{\mysi} \leftarrow \textsc{Buildtrie}(V)$ \label{alg:init-trie}\;
  queue $\leftarrow$ ($r_{\mysi} \neq r$) ? [$r$, $r_{\mysi}$] : [$r$] \label{alg:init:bfs_queue}\;
  \While{\mykw{not} \textsc{Empty}(queue)} {
    $u \leftarrow$ \textsc{Dequeue}(queue) \;
    \For{$c,v$ \mykw{in} \textsc{OutgoingEdges}($u$)} {
        \lIf{$v = r_{\mysi}$}{ \Continue \label{alg:init:skip}}
        \If{$\chi_v \in V$ \label{alg:init:core-start}}{
            $f(v), F(v) \leftarrow r_{\mysi}, [\chi_v]$ \label{alg:init:case1}
        }
        \Else{
            $z, Z \leftarrow$ $f(u)$, $\myemptylist$ \;
            \While{$z \neq \nulls$ \mykw{and} $\delta(z,c) = \nulls$\label{alg:init:while-begin}}{
                $Z \leftarrow $ \textsc{Extend}(Z, $F(z)$) \label{alg:init:z-update}\;
                $z \leftarrow f(z)$ \label{alg:init:while-end}\;
            }
            \If{$z \neq \nulls$}{
                $f(v), F(v) \leftarrow \delta(z,c), F(u)+Z$ \label{alg:init:core-end} \;
            }
        }
        \textsc{Enqueue}(queue, $v$) 
    } 
  }
  \Return $r$ \;
}
\caption{Precomputation}
\label{alg:init}
\end{algorithm}

\subsection{Complexity Analysis}
\label{sec:complexity}

The complexity of tokenization (Algorithm~\ref{alg:inference}) can be proved to be $O(n)$
in a similar way as Aho-Corasick~\cite{aho-corasick-1975-efficient}.
In brief, each step (an iteration of the loop from lines~\ref{alg:inference:step_start}-\ref{alg:inference:goto}) makes zero or more failure transitions followed by exactly one normal (non-failure) transition. In each step, suppose we start at node $u$ with depth $d$.  We never follow more than $d$ failure transitions in that step: each failure transition takes us to a node with a strictly smaller depth.  Any normal transition along trie edges increments the depth $d$ of node $u$ by 1 (line~\ref{alg:inference:goto}). Therefore, the total number of failure transitions is no more than the total number of normal transitions, which is $O(n)$.  Each transition is $O(1)$ plus the work to extend the list of tokens on line~\ref{alg:inference:append}.  As there are at most $n$ resulting tokens in total, the total tokenization time is $O(n)$.

Since at least $n$ operations are required to read the entire input, our $O(n)$ algorithm is asymptotically optimal.  To the best of our knowledge, this is the first time that the optimal complexity for MaxMatch is proved to be strictly $O(n)$, without a vocabulary-specific multiplicative factor.

For precomputation (Algorithm~\ref{alg:init}), the BFS traversal itself is $O(M)$, where $M$ is the sum of the lengths of vocabulary tokens. A similar depth-based analysis (as in the case of the tokenization algorithm) shows that that the total number of times we traverse a failure
link on line~\ref{alg:init:while-end} is $O(M)$. 

The non-trivial parts are the construction of $F(\cdot)$ on lines~\ref{alg:init:z-update} and \ref{alg:init:core-end}. 
The total size of $F(\cdot)$ is $O(Mm)$:
there are $O(M)$ lists, and the size of each list is $O(m)$.  A straightforward implementation needs $O(Mm)$ time and space to construct and store $F(\cdot)$.
This is good enough in practice, as the precomputation is performed offline before any tokenization process.  We plan to discuss optimized implementations in a follow-up publication.

\subsection{Connection with Other Methods / Tasks}
\label{sec:algo-discussions}

\lmmshort{} can be turned into a finite-state transducer (FST) \cite{mohri-1997-finite} by eliminating the failure transitions in Algorithm~\ref{alg:inference}.\footnote{This is analogical to \citet{aho-corasick-1975-efficient} where the Aho-Corasick algorithm can be stated as a deterministic finite-state automaton.} 
An FST extends a finite-state automaton (FSA) with an output tape. 
To turn \lmmshort{} into an FST, for node $u$ and character $c$, we define the state transition function $\delta'(u,c)$ and the output function $\sigma'(u,c)$ as follows:
\begin{itemize}
\item 
$\delta'(u,c)$ precomputes the final state in lines~\ref{alg:inference:failurewhilebegin}-\ref{alg:inference:goto} of Algorithm~\ref{alg:inference}, where it starts from $u$ and follows failure transitions as needed, until it consumes $c$ or meets a null failure link; 
    \item $\sigma'(u,c)$ consists of the failure pops collected along the way.
\end{itemize}
Specially, if the original trie link $\delta(u,c)$ exists, 
according to the above definition, it's obvious that
$\delta'(u,c)=\delta(u,c)$ and $\sigma'(u,c)=[]$. 
Then lines~\ref{alg:inference:failurewhilebegin}-\ref{alg:inference:goto}  in Algorithm~\ref{alg:inference} can be replaced with two statements: $\text{tokens} \leftarrow \textsc{Extend}(\text{tokens}, \sigma'(u, s[i]))$ and $u \leftarrow \delta'(u, s[i])$; the loop (started on line~\ref{alg:inference:step_start}) breaks when $u$ becomes $\nulls$. Hence, \lmmshort{} makes exactly one state transition on each input character.  Obviously, the time complexity is linear, despite more space needed to store precomputed results.

\lmmshort{} extends the Aho-Corasick Algorithm~\citep{aho-corasick-1975-efficient}. It can be applied to more string search or transducer problems. 
Let us name a few here. 
\lmmshort{} can be adapted to solve 
the multi-keyword search problem which Aho-Corasick is designed for. 
It can be also adapted to address other MaxMatch variants, such as Backward MaxMatch~\cite{webster-kit-1992-tokenization}, recognizing unseen characters as single-character tokens~\cite{handbook-nlp}, or combing with transformation rules~\cite{sassano-2014-deterministic}. 
Other potential applications include word segmentation in Asian languages~\cite{sassano-2014-deterministic}, phonological or morphological analysis~\cite{kaplan-1994-phonology,jurafsky-martin-2009-speech}.

\section{Linear-Time End-to-End Tokenization}
\label{sec:e2ewp}

The existing BERT tokenization implementations~\cite{bertoss}
pre-tokenize the input text (splitting it into words by punctuation and whitespace characters) and then call WordPiece tokenization on each resulting word. For example, the text \mytxt{john johanson's} may be split into [\mytxt{john}, \mytxt{johan}, \mytxts{son}, \mytxt{'}, \mytxt{s}].

We propose an end-to-end WordPiece tokenizer that combines pre-tokenization and WordPiece into a single, linear-time pass. It uses the \lmmshort{} trie matching and failure transition loop as much as possible and only checks for punctuation and whitespace characters among the relatively few input characters that are not handled by the loop. It is more efficient as it traverses the input only once, performs fewer punctuation / whitespace checks, and skips the creation of intermediate words.

\paragraph{Precomputation} We use the same process as in Section~\ref{sec:initialization}, with several differences:

After the trie is constructed, we remove all trie links labeled with a punctuation character.\footnote{This may remove links on the path from $r$ to $r_{\mysi}$ when the suffix indicator contains a punctuation; those links were unnecessary: $r_{\mysi}$ is reached only by following failure links.}  Then, for every possible punctuation character $c$, we add a trie data node $v$ with no descendants, and a trie link from the root $r$ to $v$ with label $c$.  If $c$ is part of the vocabulary, we set $\chi_v = c$, otherwise $\chi_v = \myunk$.

The resulting trie matches all punctuation characters, as either themselves or as $\myunk$, depending on the vocabulary.  Punctuation characters are not part of longer tokens, and there is no suffix token for a punctuation character.  This reflects the fact that each punctuation character is a word by itself.

We then run the rest of Algorithm~\ref{alg:init} to compute the failure pops and failure links.

Finally, for punctuation nodes, we set their failure links to a special node $r_p$; their failure pops are not changed. The special node $r_p$ has no parent and no descendants, and $\chi_{r_p}=\varepsilon,f(r_p) = \nulls$.  Node $r_p$ indicates that a punctuation character was matched.

\paragraph{Tokenization} Algorithm~\ref{alg:e2ewp} tokenizes general text into wordpieces.  It starts by appending a whitespace \mywhitespace{} at the end of the input (line~\ref{alg:e2ewp:start}).  In each iteration, it recognizes wordpieces for the current word by employing (almost) the same routine as in single-word tokenization (lines~\ref{alg:e2ewp:routinestart}-\ref{alg:e2ewp:edge_case} in Algorithm~\ref{alg:e2ewp} versus lines~\ref{alg:inference:call}-\ref{alg:inference:edge_case} in Algorithm~\ref{alg:inference}).\footnote{The common routine can be factored out as a function.} 

When returning from \textsc{MatchLoop}(), Algorithm~\ref{alg:e2ewp} must have met a character that cannot be consumed after attempting failure transitions, such as a whitespace, a punctuation, or some unseen character. Lines~\ref{alg:e2ewp:check}-\ref{alg:e2ewp:reset} examine whether the current word can be tokenized (by checking if the current position is at a word boundary and where the node $u$ lands at) and reset the tokens as appropriate (see related discussions in Section~\ref{sec:inference}).

Lines~\ref{alg:e2ewp:check_edge_case}-\ref{alg:e2ewp:edge_case} further handle the corner case that the word happens to be the suffix indicator itself (in the same way as Algorithm~\ref{alg:inference}, see Section~\ref{sec:inference}). 
Note that normally the suffix indicator contains only punctuation characters (e.g., \mytxts{} in BERT); in that case
lines~\ref{alg:e2ewp:check_edge_case}-\ref{alg:e2ewp:edge_case} can be saved, because the suffix indicator itself is not be tokenized as a single word.

The tokens of the current word are then appened to the result (line~\ref{alg:e2ewp:append}).
Finally, the algorithm moves the cursor past the boundary of the current word (lines~\ref{alg:e2ewp:loop3_start}-\ref{alg:e2ewp:loop3_end}) and skips any following whitespaces (lines~\ref{alg:e2ewp:loop4_start}-\ref{alg:e2ewp:loop4_end}) to process the next word.

It can be shown that Algorithm~\ref{alg:e2ewp} is consistent with \citet{bertoss} 
for general text tokenization, and the time complexity is $O(n)$.

\begin{algorithm}
\DontPrintSemicolon
\mysetind
\SetKwProg{Fn}{Function}{:}{}
\nonl
\Fn{\textsc{E2EWordPiece}{\normalfont (\text{text}) }}{

  $\text{result}, \; s, \; i \leftarrow \myemptylist, \; \text{text}\mywhitespace{}, \; 0$\label{alg:e2ewp:start} \;
  \While{${\normalfont i} < |s|$ \label{alg:e2ewp:loop1_start}}{
    $\text{tokens}, u, i \leftarrow  \textsc{MatchLoop}(s, i)$ \label{alg:e2ewp:routinestart} \;
    \If{\mykw{not} $\!\!$ \textsc{\isatwordboundary{}}($s,\! i$) $\!\!$ 
    \mykw{or} $\!u \!\notin \!\{r, \! r_{ \! \mysi}, \! r_p \!\} \!\! $ \label{alg:e2ewp:check}}{
        tokens $\leftarrow [ \myunk ]$ \label{alg:e2ewp:reset}
    }
    \ElseIf{
    $
    u = r_{\mysi}  
      \mykw{ and } |\text{tokens}|=0 $ \label{alg:e2ewp:check_edge_case}}{
        tokens $\leftarrow \textsc{OriginalWordPiece}(\mysi)$ \label{alg:e2ewp:edge_case}
    }
    result $\leftarrow$ \textsc{Extend}(result, tokens) \label{alg:e2ewp:append}\;
    \While{$i\! <\! |s|$ \mykw{and} 
    \mykw{not} \textsc{\isatwordboundary{}}($s,i$)\label{alg:e2ewp:loop3_start}}{
        $i \leftarrow i+1$ \label{alg:e2ewp:loop3_end}
    }
    \While{$i < |s|$ \mykw{and} \textsc{IsSpace}$(s[i])$\label{alg:e2ewp:loop4_start}}{
        $i \leftarrow i+1$\label{alg:e2ewp:loop4_end}
    }
    \label{alg:e2ewp:loop1_end}
  }
  \Return result 
}
\nonl
\Fn{\textsc{\isatwordboundary{}}{\normalfont (}$s$, $i${\normalfont )}}{
    \Return $i\!\geq\!|s|$ \mykw{or} 
    ($i\!>\!0$ \mykw{and} \textsc{IsPunc}$(s[i\!-\!1])$) \mykw{or} \textsc{IsSpace}$(s[i])$ \mykw{or} \textsc{IsPunc}$(s[i])$     \label{alg:e2ewp:check_word_boundary}
}
\caption{End-to-End Tokenization}
\label{alg:e2ewp}
\end{algorithm}

\section{Experiments} %
\label{sec:experiments}

\paragraph{Experimental Setup}

We benchmark our method against two widely-adopted WordPiece tokenization implementations:

\begin{itemize}

\item HuggingFace Tokenizers~\cite{huggingface-whole-package}, from the HuggingFace Transformer library, one of the most popular open-source NLP tools.

\item TensorFlow~Text~\cite{tftext-whole-package}, the official library of text utilities for 
TensorFlow.

\end{itemize}

In both cases, we use pre-tokenization and WordPiece tokenization,  and skip other steps provided by those libraries (text cleanup, normalization, etc) for fair comparison.   Both libraries use the original WordPiece tokenization algorithm~\cite{bertoss}.  They both generate not only the numeric ids of the tokens, but also the token strings and start/end offsets of the input word.  We modify both libraries to generate only the token ids,\footnote{The original TensorFlow~Text first generates the token strings and next looks them up in a dictionary to generate token ids.  For a fair comparison, we adapt it to directly return the token ids, with no intermediate token strings.} for two reasons: (1) most downstream models (e.g., BERT) consume only the token ids, and (2) we want to focus on the core tokenization work, not on, e.g., string copying.

We implement \lmmshort{} and \etoeshort{} and made them return the numeric ids of the tokens, leveraging a double array-based trie 
library~\cite{yata2007237}. 

We compare our algorithms with HuggingFace and TensorFlow~Text on a large corpus (several million words) and found that the tokenization results are identical for both single-word and end-to-end tokenization. In the rest of this section, we focus on the tokenization speed.

All experiments are conducted on a Linux desktop with a six-core Intel Xeon @ 3.60GHz CPU and 64GB memory.  %
We iterate each benchmark (after warming up) until it ran for a long-enough period of time, repeat each experiment 10 times, and report the average results. Our method is implemented and benchmarked in C++; so is TensorFlow~Text. HuggingFace uses (and is benchmarked in) Rust.

We use the WordPiece vocabulary released with the BERT-Base, Multilingual Cased model, a model that supports 104 languages~\cite{bertoss}.

To generate the test data, we sample 1,000 sentences from the multilingual Wikipedia dataset, covering 82 languages including English, Chinese, French, Russian, etc. On average, each word has 4 characters, and each sentence has 82 characters or 17 words.  We found this dataset large enough: a much larger dataset (consisting of hundreds of thousands of sentences) generated similar results. 

We run BERT's 
{\tt BasicTokenizer}~\cite{bertoss}
to clean up and normalize each sentence, including Unicode clean-up and normalization.  Following the 
guidance
for the BERT-Base Multilingual Cased model~\cite{bertoss}, we do not instruct {\tt BasicTokenizer} to do lower casing or accent stripping.  In addition, preprocessing adds spaces around every CJK character, and thus Chinese is effectively character-tokenized.  For simplicity, we keep Chinese in the test set, but keep in mind that each Chinese word is just one Chinese character, and any WordPiece implementation is efficient on such short words.  Using a dataset with long words would emphasize the speed advantage of our algorithm even more than indicated below.

For single-word tokenization, we further used {\tt BasicTokenizer} to pre-tokenize each sentence on punctuation and whitespace characters. This results in 17,223 words, 8,508 of them unique.

\paragraph{Results} Table~\ref{tab:bench_wiki_all} shows the mean and the 95 percentile\footnote{When computing the 95 percentile, the running time on each individual input is approximated by the average running time of all input examples of the same length.\label{foot:pctl}} running time when tokenizing a single word or general text (end-to-end) for each system. 
For single-word tokenization, ours is 3x faster on average; the speedup is greater for long-tail inputs. Regarding general text end-to-end tokenization, ours is 8.2x faster than HuggingFace and 5.1x faster than TensorFlow~Text on average.
Figure~\ref{fig:wiki_by_len} shows how the running time grows with respect to the input length for single-word tokenization. 

\begin{table}[htbp]
\centering
\setlength\tabcolsep{3pt} %
\begin{tabular}{lrrrr}
\toprule
\multirow{2}{*}{System}    & \multicolumn{2}{c}{Single Word} & \multicolumn{2}{c}{End-to-End}\\
\cmidrule(r{0.3pt}){2-3} 
\cmidrule(l){4-5}
                              & mean &  95pctl  
                              & mean &  95pctl  \\
\midrule
HuggingFace     & 274 &   778     & 13,397 &  40,255  \\
TensorFlow~Text & 246 &   622     & 8,247 &   23,507 \\
Ours    &  82 &   139     &  1,629  &   4,400 \\
\bottomrule
\end{tabular}
\caption{The running time of each system in ns.}
\label{tab:bench_wiki_all}
\end{table}

\begin{figure}[htb!]
\begin{center}
\includegraphics[width=0.9\linewidth]{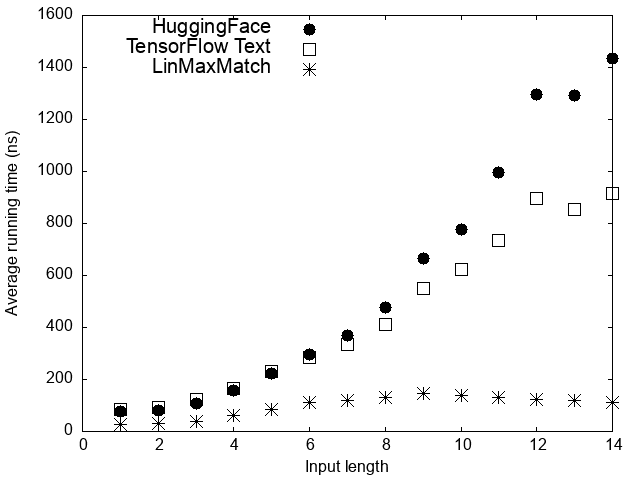}
\caption{Average running time of each system with respect to the input length for single-word tokenization.}
\label{fig:wiki_by_len}
\end{center}
\end{figure}

\section{Conclusion}%
\label{sec:conclusions}

We proposed \lmmshort{} for single-word WordPiece tokenization, which is asymptotically-optimal linear-time with respect to the input length, without a vocabulary-specific multiplicative factor.  We also proposed \etoeshort{} that combines pre-tokenization and WordPiece tokenziation into a single, linear-time pass for even higher efficiency. Experimental results show that our approach is 8.2x faster than HuggingFace and 5.1x faster than TensorFlow~Text on average for general text tokenization. For future work,  we will adapt the proposed methods to more text processing techniques.

\section{Acknowledgements}%
We thank Xiaoxue Zang, Gabriel Schubiner, Max Gubin, Jacob Devlin, Kristina Toutanova, and Ed Chi for discussing and reviewing the work, 
Mike Schuster for clarifying their referred paper, and the anonymous reviewers for their helpful feedback.

\newpage
\bibliographystyle{acl_natbib}
\bibliography{references}

\begin{thebibliography}{23}
\expandafter\ifx\csname natexlab\endcsname\relax\def\natexlab#1{#1}\fi

\bibitem[{Aho and Corasick(1975)}]{aho-corasick-1975-efficient}
Alfred~V. Aho and Margaret~J. Corasick. 1975.
\newblock \href {https://doi.org/10.1145/360825.360855} {Efficient string
  matching: An aid to bibliographic search}.
\newblock \emph{Commun. ACM}, 18(6):333--340.

\bibitem[{Brown et~al.(2020)Brown, Mann, Ryder, Subbiah, Kaplan, Dhariwal,
  Neelakantan, and et~al.}]{gpt-3}
Tom~B. Brown, Benjamin Mann, Nick Ryder, Melanie Subbiah, Jared Kaplan,
  Prafulla Dhariwal, Arvind Neelakantan, and et~al. 2020.
\newblock \href
  {https://proceedings.neurips.cc/paper/2020/hash/1457c0d6bfcb4967418bfb8ac142f64a-Abstract.html}
  {Language models are few-shot learners}.
\newblock In \emph{Advances in Neural Information Processing Systems 33: Annual
  Conference on Neural Information Processing Systems 2020, NeurIPS 2020,
  December 6-12, 2020, virtual}.

\bibitem[{Chang et~al.(2008)Chang, Galley, and
  Manning}]{chang-etal-2008-optimizing}
Pi-Chuan Chang, Michel Galley, and Christopher~D. Manning. 2008.
\newblock \href {https://www.aclweb.org/anthology/W08-0336} {Optimizing
  {C}hinese word segmentation for machine translation performance}.
\newblock In \emph{Proceedings of the Third Workshop on Statistical Machine
  Translation}, pages 224--232, Columbus, Ohio. Association for Computational
  Linguistics.

\bibitem[{Devlin et~al.(2019)Devlin, Chang, Lee, and
  Toutanova}]{devlin-etal-2019-bert}
Jacob Devlin, Ming-Wei Chang, Kenton Lee, and Kristina Toutanova. 2019.
\newblock \href {https://doi.org/10.18653/v1/N19-1423} {{BERT}: Pre-training of
  deep bidirectional transformers for language understanding}.
\newblock In \emph{Proceedings of the 2019 Conference of the North {A}merican
  Chapter of the Association for Computational Linguistics: Human Language
  Technologies, Volume 1 (Long and Short Papers)}, pages 4171--4186,
  Minneapolis, Minnesota. Association for Computational Linguistics.

\bibitem[{Fredkin(1960)}]{trie-first-reference}
Edward Fredkin. 1960.
\newblock \href {https://doi.org/10.1145/367390.367400} {Trie memory}.
\newblock \emph{Commun. ACM}, 3(9):490--499.

\bibitem[{Google(2018)}]{bertoss}
Google. 2018.
\newblock \href {https://github.com/google-research/bert} {{The WordPiece
  Algorithm in Open Source BERT}}.
\newblock URL:
  \url{https://github.com/google-research/bert/blob/master/tokenization.py#L335-L358}.
  Retrieved on 12/01/2020.

\bibitem[{{Google}(2020)}]{tftext-whole-package}
{Google}. 2020.
\newblock \href {https://www.tensorflow.org/tutorials/tensorflow_text/intro}
  {{TensorFlow Text}}.
\newblock Version 2.4.2. URL:
  \url{https://www.tensorflow.org/tutorials/tensorflow_text/intro}.

\bibitem[{{HuggingFace}(2020)}]{huggingface-whole-package}
{HuggingFace}. 2020.
\newblock \href {https://github.com/huggingface/tokenizers} {{Tokenizers}}.
\newblock Version 0.9.4. URL: \url{https://github.com/huggingface/tokenizers}.

\bibitem[{Jie et~al.(1989)Jie, Liu, and Liang}]{jie-liu-liang-1989}
Chunyu Jie, Yuan Liu, and Nanyuan Liang. 1989.
\newblock On the {M}ethods of {C}hinese {A}utomatic {S}egmentation.
\newblock \emph{Journal of Chinese Information Processing}, 3(2):1--9.

\bibitem[{Jurafsky and Martin(2009)}]{jurafsky-martin-2009-speech}
Daniel Jurafsky and James~H. Martin. 2009.
\newblock \emph{Speech and Language Processing (2nd Edition)}.
\newblock Prentice-Hall, Inc., USA.

\bibitem[{Kaplan and Kay(1994)}]{kaplan-1994-phonology}
Ronald~M. Kaplan and Martin Kay. 1994.
\newblock \href {https://www.aclweb.org/anthology/J94-3001} {Regular models of
  phonological rule systems}.
\newblock \emph{Computational Linguistics}, 20(3):331--378.

\bibitem[{Kudo(2018)}]{kudo-2018-subword}
Taku Kudo. 2018.
\newblock \href {https://doi.org/10.18653/v1/P18-1007} {Subword regularization:
  Improving neural network translation models with multiple subword
  candidates}.
\newblock In \emph{Proceedings of the 56th Annual Meeting of the Association
  for Computational Linguistics (Volume 1: Long Papers)}, pages 66--75,
  Melbourne, Australia. Association for Computational Linguistics.

\bibitem[{Liu and Liang(1986)}]{liu-liang-1986}
Yuan Liu and Nanyuan Liang. 1986.
\newblock Basic engineering for {C}hinese processing--modern {C}hinese word
  frequency count.
\newblock \emph{Journal of Chinese Information Processing}, 1(1):17--25.

\bibitem[{Mohri(1997)}]{mohri-1997-finite}
Mehryar Mohri. 1997.
\newblock \href {https://www.aclweb.org/anthology/J97-2003} {Finite-state
  transducers in language and speech processing}.
\newblock \emph{Computational Linguistics}, 23(2):269--311.

\bibitem[{Palmer(2000)}]{handbook-nlp}
David~D. Palmer. 2000.
\newblock Tokenisation and sentence segmentation.
\newblock In Robert Dale, Hermann Moisl, and Harold Somers, editors,
  \emph{Handbook of Natural Language Processing}, chapter~2, pages 24--25.
  Marcel Dekker.

\bibitem[{Reps(1998)}]{reps-1998}
Thomas Reps. 1998.
\newblock \href {https://doi.org/10.1145/276393.276394} {{``Maximal-Munch''
  Tokenization in Linear Time}}.
\newblock \emph{ACM Trans. Program. Lang. Syst.}, 20(2):259--273.

\bibitem[{Sassano(2014)}]{sassano-2014-deterministic}
Manabu Sassano. 2014.
\newblock \href {https://doi.org/10.3115/v1/E14-4016} {Deterministic word
  segmentation using maximum matching with fully lexicalized rules}.
\newblock In \emph{Proceedings of the 14th Conference of the {E}uropean Chapter
  of the Association for Computational Linguistics, volume 2: Short Papers},
  pages 79--83, Gothenburg, Sweden. Association for Computational Linguistics.

\bibitem[{Schuster and Nakajima(2012)}]{schuster-nakajima-japanese}
Mike Schuster and Kaisuke Nakajima. 2012.
\newblock \href
  {http://dblp.uni-trier.de/db/conf/icassp/icassp2012.html#SchusterN12}
  {{J}apanese and {K}orean voice search}.
\newblock In \emph{ICASSP}, pages 5149--5152. IEEE.

\bibitem[{Sennrich et~al.(2016)Sennrich, Haddow, and
  Birch}]{sennrich-etal-2016-neural}
Rico Sennrich, Barry Haddow, and Alexandra Birch. 2016.
\newblock \href {https://doi.org/10.18653/v1/P16-1162} {Neural machine
  translation of rare words with subword units}.
\newblock In \emph{Proceedings of the 54th Annual Meeting of the Association
  for Computational Linguistics (Volume 1: Long Papers)}, pages 1715--1725,
  Berlin, Germany. Association for Computational Linguistics.

\bibitem[{Viterbi(1967)}]{viterbi-1967}
A.~Viterbi. 1967.
\newblock \href {https://doi.org/10.1109/TIT.1967.1054010} {Error bounds for
  convolutional codes and an asymptotically optimum decoding algorithm}.
\newblock \emph{IEEE Transactions on Information Theory}, 13(2):260--269.

\bibitem[{Webster and Kit(1992)}]{webster-kit-1992-tokenization}
Jonathan~J. Webster and Chunyu Kit. 1992.
\newblock \href {https://www.aclweb.org/anthology/C92-4173} {Tokenization as
  the initial phase in {NLP}}.
\newblock In \emph{{COLING} 1992 Volume 4: The 14th {I}nternational
  {C}onference on {C}omputational {L}inguistics}.

\bibitem[{Yang et~al.(2019)Yang, Dai, Yang, Carbonell, Salakhutdinov, and
  Le}]{yang-xlnet-2019}
Zhilin Yang, Zihang Dai, Yiming Yang, Jaime~G. Carbonell, Ruslan Salakhutdinov,
  and Quoc~V. Le. 2019.
\newblock \href
  {https://proceedings.neurips.cc/paper/2019/hash/dc6a7e655d7e5840e66733e9ee67cc69-Abstract.html}
  {Xlnet: Generalized autoregressive pretraining for language understanding}.
\newblock In \emph{Advances in Neural Information Processing Systems 32: Annual
  Conference on Neural Information Processing Systems 2019, NeurIPS 2019,
  December 8-14, 2019, Vancouver, BC, Canada}, pages 5754--5764.

\bibitem[{Yata et~al.(2007)Yata, Oono, Morita, Fuketa, Sumitomo, and ichi
  Aoe}]{yata2007237}
Susumu Yata, Masaki Oono, Kazuhiro Morita, Masao Fuketa, Toru Sumitomo, and Jun
  ichi Aoe. 2007.
\newblock \href {https://doi.org/https://doi.org/10.1016/j.ipm.2006.04.004} {A
  compact static double-array keeping character codes}.
\newblock \emph{Information Processing \& Management}, 43(1):237--247.

\end{thebibliography}

\newpage
\appendix
\section{Mathematical Formulations and Proofs of \lmmshort{}}
\label{sec:appendix-math}
In this section, we present the mathematical formulations of the proposed \lmmshort{} algorithm and prove the correctness.

We introduce more notations here. 
\begin{definition}
\label{def:string_length}
The \emph{length} of string $w$ is $|w|$ (i.e., the number of characters in $w$) if $w$ does not start with $\mysi$; otherwise, its length is $|w|-|{\mysi}|$. 
\end{definition}
For example, the length of \mytxt{abc} is 3, the length of \mytxts{d} is 1 (where \mytxts{} is the suffix indicator), and the length of $\varepsilon$ or $\mysi$ is 0.

\begin{definition}
\label{def:pw}
Given vocabulary $V$, let ${p_w}$ be \emph{the longest non-empty prefix} of $w$ that is in $V$. That is,  
\begin{align*}
p_w \eqdef \arg \max_{w'} \big\{ |w'| \mid & \; w' \text{ is a prefix of } w, \\
                                       & \; w' \in V, w' \notin \{\varepsilon, \mysi{}\} \big\}
\end{align*}
Specially, $p_w \eqdef\varepsilon$ if no such prefixes exist. In addition, if $w$ starts with $\mysi$, the prefix $p_{w}$ should also start with $\mysi$ (unless $p_{w}$ is empty).\footnote{For example, suppose that the suffix indicator is \mytxts{}, and \mytxt{\#} (a single character) is in $V$ but \mytxts{a} is not in $V$. Then by definition $p_{\mytxts{a}}$ is not $\mytxt{\#}$ (the character); it is $\varepsilon$ instead.} When $w=\mysi$, $p_{\mysi}\eqdef\varepsilon$ for clarity. 
\end{definition}

\begin{definition}
\label{def:qw}

Let $q_w$ be \emph{the suffix} of $w$ after replacing the prefix $p_w$ with $\mysi$. That is, if $w=p_w w''$, $q_w \eqdef \mysi w''$. 
\end{definition}

For example, if $V=\{\mytxt{a}, \mytxt{ab}, \mytxts{c}\}$, let the suffix indicator $\mysi$ be \mytxts{}, then 
$p_{\mytxt{abcd}}=\mytxt{ab}$, $q_{\mytxt{abcd}}=\mytxts{cd}$,
$p_{\mytxts{cd}}=\mytxts{c}$, and $q_{\mytxts{cd}}=\mytxts{d}$.
We see that if $w \in V$, $p_w=w$ and $q_w=\mysi$. 

\begin{lemma} \label{thm:pwc}
For an nonempty string $wc$, where $c$ is the last character and $w$ is the prefix ($w$ could be $\varepsilon$ or $\mysi$), if $wc \notin V$, we have $p_{wc}=p_w$ and $q_{wc}=q_wc$.
\end{lemma}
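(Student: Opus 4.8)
The plan is to prove the two equalities in sequence, deriving the statement about $q_{wc}$ as an easy consequence of the one about $p_{wc}$. The whole argument rests on one structural observation: since $c$ is appended at the \emph{end} of $w$, the non-empty proper prefixes of $wc$ are exactly the prefixes of $w$, and the only genuinely new prefix of $wc$ is $wc$ itself.

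First I would establish $p_{wc}=p_w$. By Definition~\ref{def:pw}, $p_{wc}$ is the longest element of the candidate set $\{w' : w' \text{ is a prefix of } wc,\ w'\in V,\ w'\notin\{\varepsilon,\mysi\}\}$ (subject to the suffix-indicator constraint), and similarly for $p_w$ with $wc$ replaced by $w$. By the structural observation, the prefixes of $wc$ are precisely the prefixes of $w$ together with $wc$, so the two candidate sets differ at most by the single element $wc$. The hypothesis $wc\notin V$ removes $wc$ from consideration, so the two candidate sets coincide; since a string has a unique prefix of each length there is no tie, and equal sets have the same longest element. This yields $p_{wc}=p_w$, including the degenerate subcase where both sets are empty and $p_{wc}=p_w=\varepsilon$.

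Given $p_{wc}=p_w$, the second equality is almost immediate. Writing $w=p_w\,w''$ as in Definition~\ref{def:qw}, so that $q_w=\mysi\,w''$, I have $wc=p_w\,w''c=p_{wc}\,(w''c)$. Applying Definition~\ref{def:qw} to $wc$, whose longest-matching prefix is $p_{wc}=p_w$, gives $q_{wc}=\mysi\,(w''c)=(\mysi\,w'')c=q_w c$, as required.

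The one place that needs care — and the main obstacle — is checking that the suffix-indicator constraint is applied identically to $w$ and $wc$, i.e.\ that ``$w$ starts with $\mysi$'' and ``$wc$ starts with $\mysi$'' are the same condition, so that the two candidate sets genuinely coincide rather than merely overlap. Appending $c$ at the end cannot change the initial segment of a string, so this holds whenever $|w|\ge|\mysi|$. The only delicate cases are the short/degenerate ones flagged in the statement ($w=\varepsilon$ or $w=\mysi$), where the length conventions and the special stipulations $p_\varepsilon=p_\mysi=\varepsilon$ from Definition~\ref{def:pw} must be invoked; I would verify the two equalities directly there. For instance, when $p_w=\varepsilon$ both $q_w$ and $q_{wc}$ reduce to $\mysi$ prepended to the entire remaining string, and $q_{wc}=q_w c$ again follows at once.
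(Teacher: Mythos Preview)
Your proposal is correct and takes essentially the same approach as the paper: both argue that the only prefix of $wc$ not already a prefix of $w$ is $wc$ itself, which is excluded by the hypothesis $wc\notin V$, whence $p_{wc}=p_w$; the derivation of $q_{wc}=q_wc$ is then identical. If anything, you are more careful than the paper in flagging the suffix-indicator constraint and the degenerate cases $w\in\{\varepsilon,\mysi\}$, which the paper's proof passes over with ``it is obvious.''
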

\begin{proof}
First, we prove that $p_{wc}$ does not include the last character $c$ by contradiction. Let's suppose that $p_{wc}$ includes the last character $c$. Then $p_{wc}=wc$ (since $p_{wc}$ is a prefix of $wc$). Because $p_{wc}\in V$ (Definition~\ref{def:pw}), $wc \in V$, which contradicts that $wc \notin V$. 

Now, because $p_{wc}$ does not include the last character $c$, it is obvious that $p_{wc}=p_w$.

Next, let $w=p_w w''$, then $q_w=\mysi{} w''$ (Definition~\ref{def:qw}). Since $p_{wc}=p_w$, we have $wc=p_w w''c=p_{wc}w''c$. Therefore, $q_w=\mysi{} w''c=q_w c$. 
\end{proof}

Let $\gamma_{w}$ denote the trie node that represents the string $w$ (so $\chi_{\gamma_w} = w$), or $\nulls$ if no such nodes exist. When $\gamma_w \neq \nulls$, we say the string $w$ is \emph{on the trie}. For the example in Figure~\ref{fig:example}, $\gamma_{\mytxt{abcd}}$ is the node 6 while $\gamma_{\mytxt{abcdz}}=\nulls$.

Table~\ref{tab:notation2} summarizes the additional notations.

\begin{table}[htb!]
    \centering
\newcommand\myoptional[1]{#1}
\renewcommand\myoptional[1]{}   %
\setlength\tabcolsep{2pt} %
\begin{tabular}{ll}
\textbf{Symbol} \myoptional{& \textbf{Section}} & \textbf{Meaning} \\
\hline
    $p_w$ \myoptional{& \ref{sec:problem}} & The longest prefix of $w$ being in $V$ \\
    $q_w$ \myoptional{& \ref{sec:problem}} & The suffix of $w$ after removing prefix \\
    \myoptional{&} & $p_w$, plus a preceding $\mysi$ \\
    $M(w)$ \myoptional{& \ref{sec:problem}} & MaxMatch result for $w$ given $V$ \\
    $\gamma_w$ \myoptional{& \ref{sec:concepts}} & The node that represents string $w$ \\ 
    $g(w)$ \myoptional{& \ref{sec:concepts}} & MinPop Matching $w$ onto some node \\ 
    $G(w)$ \myoptional{& \ref{sec:concepts}} & Tokens popped when computing $g(w)$ \\ 
    $h(u,c)$ \myoptional{& \ref{sec:inference}} & $g(\chi_{u}c)$ (or $\nulls$ if $u=\nulls)$ \\ 
    $H(u,c)$ \myoptional{& \ref{sec:inference}} & $G(\chi_{u}c)$ (or $\myemptylist$ if $u=\nulls$)\\ 
\end{tabular}
\caption{Additional Notations (continued from Table~\ref{tab:notation})}
\label{tab:notation2}
\end{table}

\subsection{MaxMatch in WordPiece}

MaxMatch in WordPiece tokenization~\cite{bertoss}
can be formalized as follows:\footnote{Excluding the corner case where $w=\mysi$; see discussions.}

\begin{definition}{\textbf{MaxMatch}}
\label{def:maxmatch}

Given vocabulary $V$, for string $w$, MaxMatch $M(w)$ is recursively defined as:
\begin{equation} \label{equ:mm}
M(w) \! \eqdef \!
\begin{cases}
\myemptylist & \text{if } w = \varepsilon \text{ or } \mysi, \\
[\myunk] & \text{elif } p_w = \varepsilon, \\
[\myunk] & \text{elif } M(q_w) = [\myunk], \\
[p_w]\!+\!M(q_w) \!\!\! & \text{otherwise}. 
\end{cases}
\end{equation}
\end{definition}

Note that if the input is exactly the suffix indicator $\mysi$ itself, by Definition~\ref{def:maxmatch}, $M(\mysi)\eqdef\myemptylist$, which may be different from the original MaxMatch algorithm~\cite{bertoss} (see Sec.~\ref{sec:inference}).  Throughout this section, we focus on Definition~\ref{def:maxmatch}, but be aware that if the original input is exactly the suffix indicator, we resort to the original MaxMatch algorithm.

\subsection{MinPop Matching}

We introduce a few concepts and discuss their properties and relationships, as shown in Figure~\ref{fig:concepts}, which eventually lead to the mathematical formulation of the algorithm and the proofs.

\begin{figure}[htb!]
\begin{center}
\includegraphics[width=0.9\linewidth]{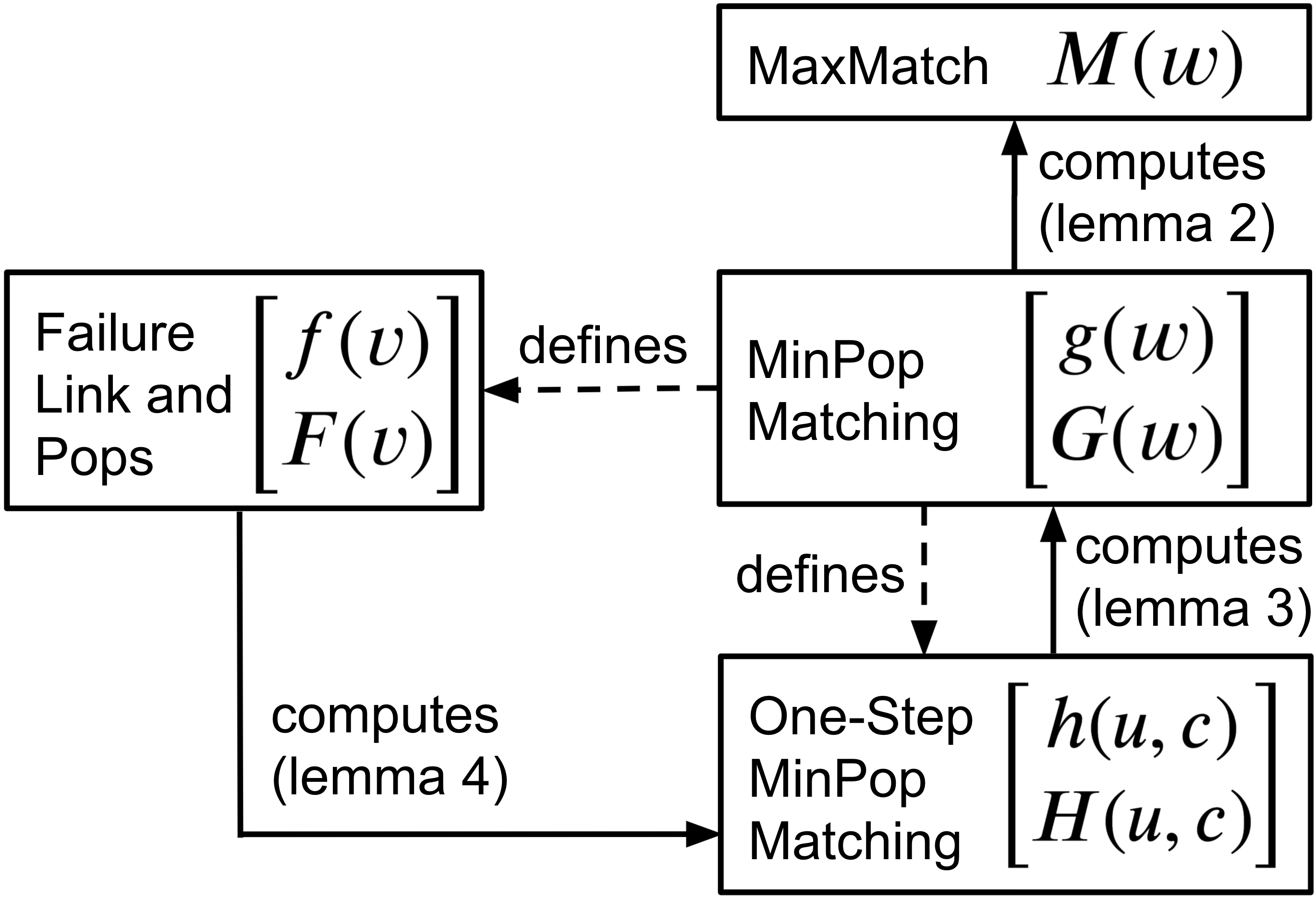}
\caption{Definitions and the relationships.}
\label{fig:concepts}
\end{center}
\end{figure}

The first concept is \textbf{MinPop Matching}, which means "\textbf{minimally popping} longest-matching prefixes off the beginning of a string until \textbf{matching} a trie node". 
The formal definition is as follows:

\begin{definition}{\textbf{MinPop Matching}}
\label{def:grounding}

For a string $w$, define:

\begin{itemize}
    \item 
    $g(w)$: returns a node that represents $w$ if possible, or a node pointing to the suffix of $w$ after popping the least number of consecutive prefixes following the left-to-right longest-match-first process if possible, otherwise $\nulls$.
\item
$G(w)$: returns the list of consecutive longest-matching prefix tokens that are popped when computing $g(w)$.
\end{itemize}

\begin{equation} \label{equ:ground}
    \begin{bmatrix}
        g(w) \\ G(w)
    \end{bmatrix}
    \!\eqdef \!
    \begin{cases}
        \!\begin{bmatrix}
            \gamma_w \\ [\;]
        \end{bmatrix} 
        & \text{if } \gamma_w \neq \nulls, \\[4mm]
        \!\begin{bmatrix}
            \nulls \\ [\;]
        \end{bmatrix} 
        &\text{elif } p_w = \varepsilon, \\[4mm]
        \!\begin{bmatrix}
            g(q_w) \\
            [p_w] \!+\!G(q_w)
        \end{bmatrix} 
        & \text{otherwise.} \\
    \end{cases}
\end{equation}
\end{definition}

\begin{example} \label{exm:ground}
Table ~\ref{tab:grounding} shows $g(w)$ and $G(w)$ of example strings using the vocabulary in Figure~\ref{fig:example}. 
\end{example}

\begin{table}[htb]
    \centering
\setlength\tabcolsep{3pt} %
\begin{tabular}{c|cccccc}
    $\pmb{w}$  & \mytxt{abcd} &  \mytxts{bcd} & \mytxts{cdz} & \mytxts{bcdz} & \mytxt{z}\\
    \hline
    $\pmb{g(w)}$ & 6 & 10 & 13 & 13 & $\nulls$\\
    $\pmb{G(w)}$  &  $\myemptylist$ &  [\mytxts{b}] & [\mytxts{c}] & [\mytxts{b}, \mytxts{c}] & []\\
\end{tabular}
\caption{Examples of $g(w)$ and $G(w)$ for Figure~\ref{fig:example}}
\label{tab:grounding}
\end{table}

Note that if $w$ is on the trie, no popping is needed when computing $g(w)$ and $G(w)$. See Example~\ref{exm:ground}.

MinPop Matching provides an alternative way to compute MaxMatch as shown in Lemma~\ref{thm:ground}.
\begin{lemma} \label{thm:ground}
For ease of presentation, we augment the trie by adding two nodes representing $\mywhitespace{}$ and $\mysi\mywhitespace{}$, respectively, where \mywhitespace{} is the whitespace character that is not in the alphabet of the vocabulary. Note that although $\mywhitespace{}$ and $\mysi\mywhitespace{}$ are on the trie, the two strings are not added to the vocabulary. Figure~\ref{fig:trie_augmented} shows the augmented trie built from the example vocabulary in Figure~\ref{fig:example}. Then MaxMatch {\normalfont $M(w)$} can be equivalently computed as: \begin{equation}
\label{equ:lemma1}
M(w) = 
\begin{cases}
[\myunk] & \text{if } g(w\mywhitespace{}) = \nulls, \\
G(w\mywhitespace{}) & \text{otherwise}.
\end{cases}
\end{equation}
\end{lemma}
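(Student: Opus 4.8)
The plan is to prove the slightly stronger combined statement by induction on the length of $w$ (Definition~\ref{def:string_length}): namely that $M(w) = [\myunk]$ if and only if $g(w\mywhitespace{}) = \nulls$, and that $M(w) = G(w\mywhitespace{})$ whenever $g(w\mywhitespace{}) \neq \nulls$. The whole argument hinges on the fact that the whitespace character \mywhitespace{} lies outside the alphabet of the vocabulary, so $w\mywhitespace{} \notin V$ for every $w$. By Lemma~\ref{thm:pwc} this gives $p_{w\mywhitespace{}} = p_w$ and $q_{w\mywhitespace{}} = q_w\mywhitespace{}$. Substituting these into Definition~\ref{def:grounding} makes the recursion of $g$ and $G$ on $w\mywhitespace{}$ branch on exactly the same quantity $p_w$, and recurse on exactly the same argument $q_w$ (now carrying a trailing \mywhitespace{}), as the recursion of $M$ in Definition~\ref{def:maxmatch}. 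This alignment is what lets me match the two recursions branch by branch.

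First I would treat the base case $w \in \{\varepsilon, \mysi\}$. Here $w\mywhitespace{}$ equals \mywhitespace{} or $\mysi\mywhitespace{}$, which are precisely the two strings ending in \mywhitespace{} that the augmented trie was constructed to contain; hence $\gamma_{w\mywhitespace{}} \neq \nulls$, so $g(w\mywhitespace{})$ is that node and $G(w\mywhitespace{}) = \myemptylist$, matching $M(w) = \myemptylist$. For the inductive step I assume $w \notin \{\varepsilon, \mysi\}$, so $w\mywhitespace{}$ is neither \mywhitespace{} nor $\mysi\mywhitespace{}$; as those are the only trie strings ending in \mywhitespace{}, we get $\gamma_{w\mywhitespace{}} = \nulls$. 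I then split on $p_w = p_{w\mywhitespace{}}$. If $p_w = \varepsilon$, the second branch of Definition~\ref{def:grounding} gives $g(w\mywhitespace{}) = \nulls$ while the second branch of Definition~\ref{def:maxmatch} gives $M(w) = [\myunk]$, so the two sides agree. If $p_w \neq \varepsilon$, the third branches apply: $g(w\mywhitespace{}) = g(q_w\mywhitespace{})$ and $G(w\mywhitespace{}) = [p_w] + G(q_w\mywhitespace{})$. Since $q_w$ has strictly smaller length than $w$, the induction hypothesis applies to $q_w$: if $M(q_w) = [\myunk]$ then $g(q_w\mywhitespace{}) = \nulls$, forcing $g(w\mywhitespace{}) = \nulls$ and $M(w) = [\myunk]$; otherwise $g(q_w\mywhitespace{}) \neq \nulls$ and $G(q_w\mywhitespace{}) = M(q_w)$, so $g(w\mywhitespace{}) \neq \nulls$ and $G(w\mywhitespace{}) = [p_w] + M(q_w) = M(w)$. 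Each subcase reproduces the corresponding branch of Definition~\ref{def:maxmatch}.

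Two points need care, and I expect the conceptual crux to be the first. The augmented trie is exactly what makes $g(w\mywhitespace{}) \neq \nulls$ faithfully characterize tokenizability: without the appended whitespace, $g(w)$ can halt on a non-terminal interior node (for instance $g(\mytxt{abcd}) = 6$ in Table~\ref{tab:grounding}) even though \mytxt{abcd} is not tokenizable, so $g(w) \neq \nulls$ would wrongly signal success. Appending \mywhitespace{} removes this ambiguity because \mywhitespace{} extends no vocabulary token, so the popping recursion cannot stop early on such an interior node; it must keep popping longest-matching prefixes until it either reaches the unique success anchor \mywhitespace{} or $\mysi\mywhitespace{}$, or gets stuck with $p = \varepsilon$ at a non-trie suffix and returns $\nulls$. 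Establishing that these two anchors are hit exactly when MaxMatch succeeds is the heart of the argument, and it is precisely what the case analysis above verifies. The second, routine, point is well-foundedness: writing $w = p_w w''$ and $q_w = \mysi w''$, Definition~\ref{def:string_length} assigns $q_w$ length $|w''|$, whereas $w$ has length at least $|w''|+1$ because $p_w \notin \{\varepsilon, \mysi\}$ contributes at least one unit of length (after discounting a leading $\mysi$ when $w$ starts with $\mysi$); hence the induction measure strictly decreases and the recursion terminates.
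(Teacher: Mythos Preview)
Your proposal is correct and follows essentially the same route as the paper's (sketch of) proof: both handle $w \in \{\varepsilon, \mysi\}$ as a base case, observe $\gamma_{w\mywhitespace{}} = \nulls$ otherwise, and then align the recursion of $M$ with that of $g,G$ by repeatedly popping longest-match prefixes until reaching $\mysi\mywhitespace{}$ or getting stuck with $p_w=\varepsilon$. Your version is more explicit---a formal induction on the length of $w$ together with an invocation of Lemma~\ref{thm:pwc} to identify $p_{w\mywhitespace{}}=p_w$ and $q_{w\mywhitespace{}}=q_w\mywhitespace{}$---whereas the paper leaves those steps implicit in its sketch.
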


\begin{proof}
If $w$ is either $\varepsilon$ or $\mysi$, it's straightforward that $g(w\mywhitespace{})$ is $\gamma_{\mywhitespace{}}$ or $\gamma_{\mysi\mywhitespace}$, which is not $\nulls$ on the augmented trie, and $G(w\mywhitespace{})=[]=M(w)$.

Let $w \notin \{ \varepsilon, \mysi \}$. Since $\mywhitespace$ is not in the vocabulary alphabet, $w\mywhitespace{}$ is not on the trie (i.e., $\gamma_{w\mywhitespace{}}=\nulls$).

If $w$ can be successfully tokenized, according to Equation~\ref{equ:ground}, it will keep popping the longest-matching prefixes until the remaining suffix becomes $\mysi\mywhitespace{}$, which is on the augmented trie. Hence, $g(w\mywhitespace{})$ becomes $\gamma_{\sharp\mywhitespace{}}$ ($\neq \nulls$), and $G(w\mywhitespace{})$ equals to $M(w)$. 

Otherwise, by Equation~\ref{equ:ground}, at some point $p_w$ will be $\varepsilon$; thus, $g(w\mywhitespace{})$ will eventually be $\nulls$. Equation~\ref{equ:lemma1} returns $[\mytxt{<unk>}]$, which equals to $M(w)$.
\end{proof}

\begin{figure*}[htb!]
\begin{center}
\begin{subfigure}[b]{0.48\textwidth}
\setlength\tabcolsep{6pt} %
\begin{tabular}{|c|}
\hline
$\pmb{V}$: $\left\{ \mytxt{a}, \, \mytxt{abcdx},  \, \mytxts{b},  \, \mytxts{c}, \, \mytxts{cdy}, \,  \mytxts{dz} \right\}$ \\
\hline
\end{tabular}
\includegraphics[width=1.0\linewidth]{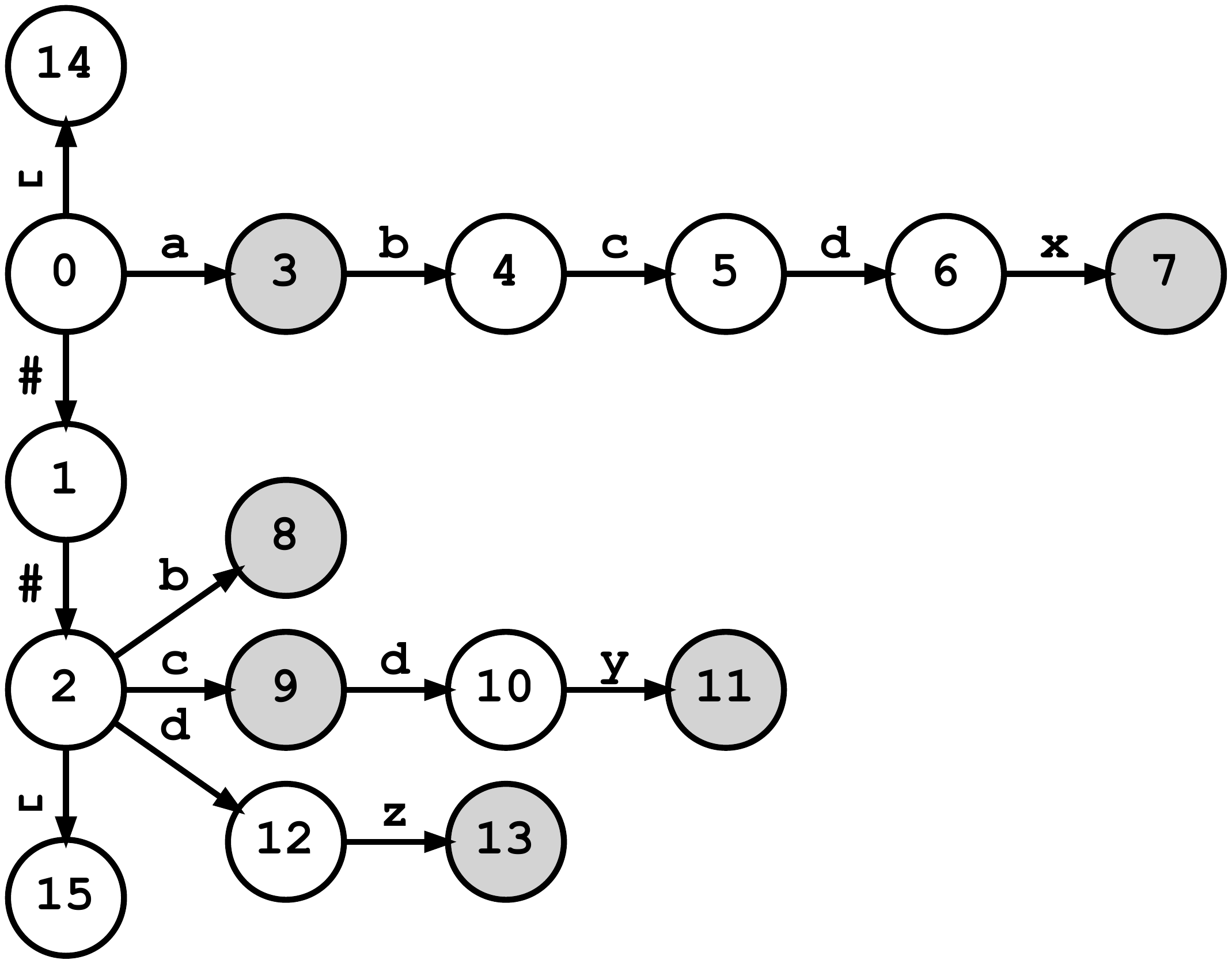}\llap{\includegraphics[width=0.4\linewidth]{legend_trie}}
\caption{The vocabulary and the augmented trie.}
\label{fig:graph3_augmented}
\end{subfigure}
\quad
\begin{subfigure}[b]{0.48\textwidth}
\setlength\tabcolsep{1pt} %
\begin{tabular}{c|ccccccccccc}
    $\pmb{v}$&  \textbf{0} & \textbf{1} & \multicolumn{2}{c}{\textbf{2}} & \multicolumn{3}{c}{\textbf{14}} & \multicolumn{2}{c}{\textbf{15}} \\
    \hline
    $\pmb{F(v)}$& $\myemptylist$ & $\myemptylist$ & \multicolumn{2}{c}{$\myemptylist$} & \multicolumn{3}{c}{$\myemptylist$} & \multicolumn{2}{c}{$\myemptylist$} \\
    $\pmb{f(v)}$& $\nulls$ & $\nulls$ & \multicolumn{2}{c}{$\nulls$} & \multicolumn{3}{c}{$\nulls$} & \multicolumn{2}{c}{$\nulls$} \\
    \hline
    \hline
    $\pmb{v}$&  \textbf{3} & \textbf{4} & \multicolumn{3}{c}{\textbf{5}} & \multicolumn{3}{c}{\textbf{6}}  & \multicolumn{3}{c}{\textbf{7}} \\
    \hline
    $\pmb{F(v)}$& [\mytxt{a}] & [\mytxt{a}] & \multicolumn{3}{c}{[\mytxt{a}, \mytxts{b}]} & \multicolumn{3}{c}{[\mytxt{a}, \mytxts{b}]} & \multicolumn{3}{c}{[\mytxt{abcdx}]} \\
    $\pmb{f(v)}$& 2 & 8 & \multicolumn{3}{c}{9} & \multicolumn{3}{c}{10} & \multicolumn{3}{c}{2} \\
    \hline
    \hline
    $\pmb{v}$ & \textbf{8} & \textbf{9} &  \multicolumn{2}{c}{\textbf{10}} & \multicolumn{3}{c}{\textbf{11}} & \multicolumn{2}{c}{\textbf{12}} & \multicolumn{2}{c}{\textbf{13}}  \\
    \hline
    $\pmb{F(v)}$ & [\mytxts{b}] & [\mytxts{c}] & \multicolumn{2}{c}{[\mytxts{c}]} & \multicolumn{3}{c}{[\mytxts{cdy}]} & \multicolumn{2}{c}{$\myemptylist$} & \multicolumn{2}{c}{[\mytxts{dz}]}  \\
    $\pmb{f(v)}$ & 2 & 2 & \multicolumn{2}{c}{12} & \multicolumn{3}{c}{2} & \multicolumn{2}{c}{$\nulls$} & \multicolumn{2}{c}{2}\\
\end{tabular}
\caption{Complete table of $f(v)$ and $F(v)$. }
\label{fig:F_augmented}
\end{subfigure}
\end{center}
\caption{Augmented trie of the same example vocabulary and the table of failure links and failure pops. Compared to Figure~\ref{fig:example}, nodes 14 and 15 are added representing $\mytxt{\mywhitespace{}}$ and $\mytxts{\mywhitespace{}}$. By Definition~\ref{def:failure-main} we see that for the added nodes (14 and 15), the failure links are $\nulls$ and failure pops are $\myemptylist$. The remaining entries of $F(\cdot)$, $f(\cdot)$ remain the same. }
\label{fig:trie_augmented}
\end{figure*}

\begin{example}
In Figure~\ref{fig:trie_augmented},
$M(\mytxt{abcdx})=G(\mytxt{abcdx}\mywhitespace{})=[\mytxt{abcdx}]$ since $g(\mytxt{abcdx}\mywhitespace{})$ is node 15 ($\neq \nulls$). 
$M(\mytxt{z})=[\myunk]$ since $g(\mytxt{z}\mywhitespace{})=\nulls$.
\end{example}

\subsection{One-Step MinPop Matching}\label{sec:one-step-minpop-matching}

Given that MaxMatch $M(w)$ can by computed via MinPop Matching (Lemma~\ref{thm:ground}), we now discuss how to efficiently compute $g(w)$ and $G(w)$ via the concept of \textit{One-Step MinPop Matching}.

\begin{definition}{\textbf{One-Step MinPop Matching}}
\label{def:one-step-grounding}

$h(u,c)$ and $H(u,c)$ capture this process: from node $u$, \textbf{match one} character $c$ by \textbf{minimally popping} longest-matching prefixes. Mathematically:
\begin{equation}
\label{equ:ground_uc}
    \begin{bmatrix}
        h(u,c) \\ H(u,c)
    \end{bmatrix}
    \!\eqdef\!
    \begin{cases}
    \begin{bmatrix}
        \nulls \\ \myemptylist
    \end{bmatrix}
        & \text{if } u = \nulls, \\[4mm]
    \begin{bmatrix}
        g(\chi_{u}c) \\ G(\chi_{u}c)
    \end{bmatrix}
        & \text{otherwise}
    \end{cases}
\end{equation}
\end{definition}

\begin{example}
Table~\ref{tab:one-step-grounding} shows some example values of $h(u,c)$ and $H(u,c)$ for Figure~\ref{fig:trie_augmented}. 
\end{example}

\begin{table}[htb]
    \centering
\newcommand\myoptional[1]{#1}
\renewcommand\myoptional[1]{}   %
\setlength\tabcolsep{1pt} %
\begin{tabular}{c|cccccc}
    $\pmb{u}$&  8 & 9 & 10 & 13 & 6 & 0\\
    $\pmb{c}$& \mytxt{c} & \mytxt{d} & \mytxt{z} & $\mywhitespace{}$ & \mytxt{z} & \mytxt{z} \\
    \hline
    $\pmb{h(u,c)}$&  9 & 10 & 13 & 14 & 13 & $\nulls$\\
    $\pmb{H(u,c)}$&  [\mytxts{b}] & $\myemptylist$ & [\mytxts{c}] & [\mytxts{dz}] & [\mytxt{a}, \mytxts{b}, \mytxts{c}] &  []
\end{tabular}
\caption{Examples of $h(u,c)$ and $H(u,c)$ for Figure~\ref{fig:trie_augmented}}
\label{tab:one-step-grounding}
\end{table}

Lemma~\ref{thm:induction} shows how to compute $g(w)$ and $G(w)$ efficiently using $h(u,c)$ and $H(u,c)$.
\begin{lemma}
\label{thm:induction}
MinPop Matching $g(\cdot), G(\cdot)$ can be computed recursively as follows: 

If the string is either $\varepsilon$ or $\mysi$, $g(\varepsilon)=r$, $G(\varepsilon)=\myemptylist$; $g(\mysi)=r_{\mysi}$, $G(\mysi)=\myemptylist$  (Definition~\ref{def:grounding}). 

Otherwise, the string contains at least one character. Let's denote the string as $wc$, where $w$ is its prefix and $c$ is the last character. ($w$ could be $\varepsilon$ or $\mysi$). Let $u = g(w)$, we have:

\begin{equation}
\label{equ:ground_induction}
    \begin{bmatrix}
        g(wc) \\ G(wc)
    \end{bmatrix}
    \!=\!
    \begin{bmatrix}
        h(u,c) \\ G(w) + H(u,c)
    \end{bmatrix} \\
\end{equation}
\end{lemma}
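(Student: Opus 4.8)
The plan is to prove Lemma~\ref{thm:induction} by unfolding the recursive definition of MinPop Matching (Definition~\ref{def:grounding}) at the string $wc$ and showing it agrees with the claimed one-step formula. Since the base case (when the string is $\varepsilon$ or $\mysi$) is handled directly by definition, I would focus on a string $wc$ with at least one character. Setting $u = g(w)$, the goal reduces to establishing the two coordinate identities $g(wc) = h(u,c)$ and $G(wc) = G(w) + H(u,c)$ simultaneously.

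The natural case split is on whether $wc$ is on the trie, mirroring the three branches of Equation~\ref{equ:ground}. \emph{Case 1}: suppose $\gamma_{wc} \neq \nulls$, i.e. $wc$ is on the trie. Then $w$ is necessarily on the trie as well (a prefix of a trie string is on the trie), so $u = g(w) = \gamma_w$ with $G(w) = \myemptylist$, and the edge $\delta(\gamma_w, c) = \gamma_{wc}$ exists. By Definition~\ref{def:grounding}, $g(wc) = \gamma_{wc}$ and $G(wc) = \myemptylist$. On the other side, $h(u,c) = g(\chi_u c) = g(wc) = \gamma_{wc}$ and $H(u,c) = G(wc) = \myemptylist$, so both identities hold. \emph{Case 2}: suppose $\gamma_{wc} = \nulls$ but $w$ is still on the trie, so again $u = \gamma_w$ and $G(w) = \myemptylist$. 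Then $h(u,c) = g(\chi_u c) = g(wc)$ and $H(u,c) = G(wc)$ directly by the definition of one-step matching, and since $G(w)$ is empty the formula $G(wc) = G(w) + H(u,c) = H(u,c)$ holds trivially. The substantive content is \emph{Case 3}: $w$ is \emph{not} on the trie, so $g(w)$ was itself computed by popping at least one longest-matching prefix.

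The key idea for Case 3 is that the popping process is driven entirely by the prefix structure, and the final character $c$ is merely carried along untouched until the suffix lands on the trie. I would argue by induction on the number of prefixes popped when computing $g(w)$, or equivalently on the length of $w$. When $w \notin V$ and $\gamma_w = \nulls$ with $p_w \neq \varepsilon$, Definition~\ref{def:grounding} gives $g(w) = g(q_w)$ and $G(w) = [p_w] + G(q_w)$. Because $wc \notin V$ (it is not even on the trie), Lemma~\ref{thm:pwc} applies to give $p_{wc} = p_w$ and $q_{wc} = q_w c$. Therefore the first pop when computing $g(wc)$ strips exactly the same token $p_w$, leaving $g(wc) = g(q_w c)$ and $G(wc) = [p_w] + G(q_w c)$. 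Now I apply the induction hypothesis to the shorter string $q_w$ (with the same trailing character $c$): writing $u = g(w) = g(q_w)$, the hypothesis yields $g(q_w c) = h(u,c)$ and $G(q_w c) = G(q_w) + H(u,c)$. Substituting back gives $g(wc) = h(u,c)$ and $G(wc) = [p_w] + G(q_w) + H(u,c) = G(w) + H(u,c)$, as desired.

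The main obstacle I anticipate is making the bookkeeping in Case 3 fully rigorous: I must verify that $wc \notin V$ genuinely holds so that Lemma~\ref{thm:pwc} is applicable (it follows since $\gamma_{wc} = \nulls$ forces $wc \notin V$, because every vocabulary token is on the trie), and I must confirm that the node $u = g(w)$ is the \emph{same} node that appears in both the recursive unfolding of $g(wc)$ and in the one-step expression $h(u,c) = g(\chi_u c)$ — this hinges on the fact that $g$ depends only on where the prefix popping terminates, and that $\chi_u = q_w q_{q_w} \cdots$ is precisely the residual suffix string reached after all pops, so $\chi_u c$ is the correct argument. I would also handle the edge subtlety that the suffix-indicator prefix condition in Definition~\ref{def:pw} interacts correctly with $q_w$ (each popped suffix is re-prefixed with $\mysi$), but Lemma~\ref{thm:pwc} already packages this, so no separate argument is needed beyond citing it.
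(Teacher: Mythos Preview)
Your overall strategy---induction on the length of $w$ with a case split driven by whether $w$ is on the trie---is exactly the paper's approach, and your Cases~1 and~2 together are the paper's Case~1 (when $\gamma_w\neq\nulls$, so $u=\gamma_w$, $\chi_u=w$, and $h(u,c)=g(wc)$, $H(u,c)=G(wc)$ directly from Definition~\ref{def:one-step-grounding}). Your Case~3 inductive step, invoking Lemma~\ref{thm:pwc} and the hypothesis on the shorter string $q_w$, matches the paper's Case~3.

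There is, however, a genuine missing case. In your Case~3 you write ``When $w \notin V$ and $\gamma_w = \nulls$ with $p_w \neq \varepsilon$,'' but you never handle the possibility that $\gamma_w=\nulls$ and $p_w=\varepsilon$. In that situation $u=g(w)=\nulls$ and $G(w)=\myemptylist$ by the second branch of Equation~\ref{equ:ground}, so neither of your Cases~1--2 (which assume $u=\gamma_w\neq\nulls$) nor your Case~3 argument (which assumes $p_w\neq\varepsilon$ so that a pop occurs) applies. This is precisely the paper's Case~2: one checks that $\gamma_{wc}=\nulls$ (since $\gamma_w=\nulls$) and $p_{wc}=p_w=\varepsilon$ (via Lemma~\ref{thm:pwc}), whence $g(wc)=\nulls$, $G(wc)=\myemptylist$; on the other side, $u=\nulls$ forces $h(u,c)=\nulls$ and $H(u,c)=\myemptylist$ by the first branch of Equation~\ref{equ:ground_uc}, and the identity holds. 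Without this case your induction in Case~3 is not well-founded, because the chain $w\to q_w\to q_{q_w}\to\cdots$ may terminate not at a string on the trie but at a string whose longest vocabulary prefix is empty, and your base cases do not cover that termination.
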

\begin{proof}
We prove by induction on the length of the prefix string $w$. Note that the length of a string does not count the leading suffix indicator (Definition~\ref{def:string_length}). 

The basis is when the length of $w$ is 0, i.e., $w$ is either $\varepsilon$ or $\mysi$. It's trivial to verify that Equation~\ref{equ:ground_induction} holds for the basis case.

For the inductive steps, let the length of $w$ be $k$ ($\geq 1$). Assume that Equation~\ref{equ:ground_induction} holds for any string $w'$ and character $c$ where the length of $w'$ is smaller than $k$. There are three cases to discuss.

 \textbf{Case 1}. $\gamma_{w}\neq \nulls$. In this case, $u=g(w)=\gamma_{w}\neq \nulls$, and $\chi_u=w$, $G(w)=\myemptylist$. By Definition~\ref{def:one-step-grounding}, 
\[
\mybm{g(wc)}{G(wc)}\!\!=\!\!\mybm{g(\chi_uc)}{G(\chi_uc)}\!\!=\!\! \mybm{h(u,c)}{H(u,c)}\!\! =\!\!\mybm{h(u,c)}{G(w)\!+\!H(u,c)}\!\!.
\]

In the remaining two cases, $\gamma_{w}=\nulls$, hence $\gamma_{wc}=\nulls$, %
which means $wc \notin V$. Hence, $p_{wc}=p_{w}$ and $q_{wc}=q_wc$ (Lemma~\ref{thm:pwc}). When computing $g(wc)$ and $G(wc)$, since $\gamma_{wc}=\nulls$, by Equation~\ref{equ:ground}, there are two remaining cases:

 \textbf{Case 2}. $\gamma_{w}=\nulls$ and $p_{wc}=\varepsilon$. We have $p_{w}=p_{wc}=\varepsilon$, so by Equation~\ref{equ:ground} $g(w)=g(wc)=\nulls$ and $G(w)=G(wc)=\myemptylist$. Since $u=g(w)=\nulls$, by Equation~\ref{equ:ground_uc} $h(u,c)=\nulls$ and $H(u,c)=\myemptylist$. Hence,
\[
\mybm{g(wc)}{G(wc)}=\mybm{\nulls}{\myemptylist}=\mybm{h(u,c)}{G(w)+H(u,c)}
\]
    
 \textbf{Case 3}. $\gamma_{w}=\nulls$ and $p_{wc}\neq \varepsilon$. Since $p_{wc}=p_w$, we have $q_{wc}=q_wc$, and $g(q_w)=g(w)=u$. Since $q_{w}$ is a shorter string whose length is smaller than $k$, by the induction assumption, we have 
\begin{align*}
\mybm{g(q_{wc})}{G(q_{wc})}&=\mybm{g(q_wc)}{G(q_wc)}=\mybm{h(u,c)}{G(q_w)+H(u,c)}\hspace{-1.5em} &
\intertext{Hence,}
\mybm{g(wc)}{G(wc)} &= \mybm{g(q_{wc})}{[p_{wc}]+G(q_{wc})} & \text{(Eq.~\ref{equ:ground})}\\
                    &= \mybm{h(u,c)}{[p_w] + G(q_w) + H(u,c)} \\
                    &= \mybm{h(u,c)}{G(w) + H(u,c)} & \text{(Eq.~\ref{equ:ground})} 
\end{align*}

Therefore, Equation~\ref{equ:ground_induction} is proved.
\end{proof}

\begin{example}
Take Figure~\ref{fig:trie_augmented} as an example, let $w=\mytxts{bcd}$ and $c=\mytxt{z}$. We know $u=g(w)$ is node 10 and $G(w)=[\mytxts{b}]$ (Table~\ref{tab:grounding}). Given $u=10$ and $c=\mytxt{z}$, we also know that $h(u,c)=h(10,\mytxt{z})=13$ and $H(u,c)=H(10, \mytxt{z})=[\mytxts{c}]$ (Table~\ref{tab:one-step-grounding}). For $wc=\mytxts{bcdz}$, we can see that $g(wc)=h(u,c)=13$, and $G(wc)=G(w)+H(u,c)=[\mytxts{b}]+[\mytxts{c}] = [\mytxts{b}, \mytxts{c}]$.
\end{example}

If we precompute and store $h(u,c), H(u,c)$ for every pair of node $u$ and character $c$, then for an arbitrary string $w$, we can efficiently compute $g(w\mywhitespace), G(w\mywhitespace)$ (Lemma~\ref{thm:induction}) and MaxMatch $M(w)$ (Lemma~\ref{thm:ground}). This results in an algorithm that can be formualized as a finite-state transducer (FST) (more discussions in Section~\ref{sec:appendix_fst}). However, it needs more space to store the $h(u,c)$ and $H(u,c)$ tables. For example, the size of the $h(u,c)$ table is $O(|T|\cdot|\Sigma|)$, where $|T|$ is the size of the trie and $|\Sigma|$ is the size of the alphabet. 

In the following sections, we show that, while maintaining the overall linear time complexity (Section~\ref{sec:complexity}), failure links $f(v)$ and failure pops $F(v)$ can be used to efficiently compute $h(u,c)$ and $H(u,c)$, but with much less space. For example, the size of $f(v)$ table is $O(|T|)$, which is much less than the $O(|T|\cdot|\Sigma|)$ space needed for the $h(u,c)$ table. This eventually results in Algorithm~\ref{alg:inference}, which is a more practical approach.

\subsection{Failure links and Failure Pops}

The mathematical definition of $f(v)$ and $F(v)$ is:

\begin{definition}{\textbf{Failure links and pops}} (continued from Definition~\ref{def:failure-main}).
\label{def:failure}
Mathematically, let $w=\chi_v$, $f(v)$ and $F(v)$ are defined as follows:
\begin{equation}
\label{equ:failure}
    \begin{bmatrix}
        f(v) \\ F(v)
    \end{bmatrix}
    \!\eqdef \!
    \begin{cases}
        \!\begin{bmatrix}
            \nulls \\ [\;]
        \end{bmatrix} 
        & \text{if } p_{w} = \varepsilon, \\[4mm]
        \!\begin{bmatrix}
            g\big(q_{w}\big) \\
            \big[p_{w}\big] \!+\!G\big(q_{w}\big)
        \end{bmatrix} \!\!\!\!\!
        & \text{otherwise.} \\
    \end{cases}
\end{equation}
\end{definition}

Lemma~\ref{thm:ground_uc_induction} shows how to compute $h(u,c)$ and $H(u,c)$ recursively based on $f(\cdot)$ and $F(\cdot)$. 

\begin{lemma} 
\label{thm:ground_uc_induction}
One-Step MinPop Matching $h(u,c)$ and $H(u,c)$ can be computed recursively as follows:
\begin{align}
\label{equ:ground_uc_induction}
    \begin{bmatrix}
        h(u,c) \\ H(u,c)
    \end{bmatrix}
    \!&=\!
    \begin{cases}
        \!\begin{bmatrix}
            \nulls \\ \myemptylist
        \end{bmatrix} 
        & {\normalfont \text{if }} u = \nulls, \\[4mm]
        \!\begin{bmatrix}
            \delta(u,c) \\ \myemptylist
        \end{bmatrix} 
        &{\normalfont \text{elif }} \delta(u,c) \neq \nulls, \\[4mm]
        \!\begin{bmatrix}
            h\big(f(u),c\big)  \\
            F(u) \!+\!H\big(f(u),c\big)
        \end{bmatrix} \!\!\!\!\!\!
        & {\normalfont \text{otherwise.}} \\
    \end{cases}
\end{align}
\end{lemma}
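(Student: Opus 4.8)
The plan is to verify the claimed identity by a case analysis that mirrors the three branches on the right-hand side of Equation~\ref{equ:ground_uc_induction}. Writing $w=\chi_u$, I recall from Definition~\ref{def:one-step-grounding} that whenever $u\neq\nulls$ we have $h(u,c)=g(wc)$ and $H(u,c)=G(wc)$, so in every non-degenerate case the task reduces to unfolding $g(wc)$ and $G(wc)$ through Definition~\ref{def:grounding} and then re-expressing the result in terms of the failure data $f(u),F(u)$.

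The first two cases are immediate. If $u=\nulls$, both sides equal $\nulls$ and $\myemptylist$ by the first branch of Definition~\ref{def:one-step-grounding}. If $u\neq\nulls$ and $\delta(u,c)\neq\nulls$, then the string $wc$ is represented by the trie node $\delta(u,c)$, i.e.\ $\gamma_{wc}=\delta(u,c)\neq\nulls$; hence the first branch of Definition~\ref{def:grounding} gives $g(wc)=\delta(u,c)$ and $G(wc)=\myemptylist$, matching the second branch of the claim.

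The substantive case is $u\neq\nulls$ with $\delta(u,c)=\nulls$. Here $wc$ is not on the trie ($\gamma_{wc}=\nulls$), and since every vocabulary token is a trie node we have $wc\notin V$; Lemma~\ref{thm:pwc} then gives $p_{wc}=p_w$ and $q_{wc}=q_wc$. I would split according to Definition~\ref{def:failure}. If $p_w=\varepsilon$, then $f(u)=\nulls$ and $F(u)=\myemptylist$, so the right-hand side collapses to $\nulls$ and $\myemptylist$; on the left, $p_{wc}=\varepsilon$ together with $\gamma_{wc}=\nulls$ forces the second branch of Definition~\ref{def:grounding}, giving exactly $g(wc)=\nulls$ and $G(wc)=\myemptylist$. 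If instead $p_w\neq\varepsilon$, the third branch of Definition~\ref{def:grounding} yields $g(wc)=g(q_{wc})=g(q_wc)$ and $G(wc)=[p_{wc}]+G(q_{wc})=[p_w]+G(q_wc)$, while Definition~\ref{def:failure} identifies $f(u)=g(q_w)$ and $F(u)=[p_w]+G(q_w)$.

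The crux is to connect $g(q_wc)$ and $G(q_wc)$ to the One-Step values at the failure node $f(u)=g(q_w)$. This is precisely Lemma~\ref{thm:induction} applied to the string $q_wc$ (prefix $q_w$, last character $c$), which gives $g(q_wc)=h(g(q_w),c)$ and $G(q_wc)=G(q_w)+H(g(q_w),c)$. Substituting then yields $g(wc)=h(f(u),c)$ and $G(wc)=[p_w]+G(q_w)+H(f(u),c)=F(u)+H(f(u),c)$, which is the third branch of the claim. I expect this reduction — recognizing $q_wc$ as the correct argument for Lemma~\ref{thm:induction} and matching $g(q_w)$ with $f(u)$ — to be the main obstacle; once it is set up, the remaining steps are bookkeeping. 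A minor point to handle with care is the suffix-indicator convention of Definitions~\ref{def:pw}--\ref{def:qw}, to confirm that $q_w$ (which begins with $\mysi$) is a legitimate argument for Lemma~\ref{thm:induction} and that the branch split on $p_w=\varepsilon$ aligns exactly with the one in Definition~\ref{def:failure}.
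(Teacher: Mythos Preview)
Your proposal is correct and mirrors the paper's proof essentially step for step: the paper also dispatches the first two branches as immediate, then in the third branch sets $w=\chi_u$, invokes Lemma~\ref{thm:pwc} to get $p_{wc}=p_w$ and $q_{wc}=q_wc$, splits on whether $p_w=\varepsilon$, and in the nontrivial subcase applies Lemma~\ref{thm:induction} (Equation~\ref{equ:ground_induction}) to the string $q_wc$ to obtain $g(q_wc)=h(g(q_w),c)$ and $G(q_wc)=G(q_w)+H(g(q_w),c)$, then substitutes $f(u)=g(q_w)$ and $F(u)=[p_w]+G(q_w)$ from Definition~\ref{def:failure}. Your identification of the Lemma~\ref{thm:induction} application as the crux is exactly right.
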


\begin{proof}
The first two rows of Equation~\ref{equ:ground_uc_induction} hold obviously. Now we prove the third row, where $u\neq \nulls$ and $\delta(u,c)=\nulls$. Let $w=\chi_u$. Since $\delta(u,c)=\nulls$, $\gamma_{wc}=\nulls$, or $wc \notin V$. Hence, $p_{wc}=p_{w}$ and $q_{wc}=q_{w}c$ (Lemma~\ref{thm:pwc}). There are two cases to discuss.

\textbf{Case 1}. If $p_{w}=p_{wc}=\varepsilon$, we have $f(u)=\nulls$ and $F(u)=\myemptylist$ (Equation~\ref{equ:failure}). Hence $h(f(u), c)=\nulls$ and $H(f(u), c)=\myemptylist$ (Equation~\ref{equ:ground_uc}).
On the other hand, since $\gamma_{wc}=\nulls$ and $p_{wc}=\varepsilon$,  by Equation~\ref{equ:ground} $g(wc)=\nulls$ and $G(wc)=\myemptylist$. So we have
\[
\mybm{h(u,c)}{H(u,c)}=\mybm{\nulls}{\myemptylist}=\mybm{h(f(u),c)}{F(u)+H(f(u),c)} \\
\]

\textbf{Case 2}. Otherwise, $p_{w}=p_{wc}\neq\varepsilon$, we have
\begin{align*}
\mybm{h(u,c)}{H(u,c)}\!&=\!\mybm{g(wc)}{G(wc)}          & \text{(Eq.~\ref{equ:ground_uc})} \\
          &=\!\mybm{g(q_{wc})}{[p_w] + G(q_{wc})}      & \text{(Eq.~\ref{equ:ground})} \\
          &=\!\mybm{g(q_wc)}{[p_w] + G(q_wc)}        & \text{($q_{wc}\!=\!q_wc$)} \\
          &=\!\mybm{h(g(q_w),c)}{[p_w] + G(q_w) + H(g(q_w),c)}\hspace{-2.5em}   & \text{(Eq.~\ref{equ:ground_induction})} \\
          &=\!\mybm{h(f(u),c)}{F(u) + H(f(u),c)}       & \text{(Eq.~\ref{equ:failure})} 
\end{align*}

Therefore, Equation~\ref{equ:ground_uc_induction} is proved.
\end{proof}

\begin{example}
In Figure~\ref{fig:trie_augmented}, let $u$ be node 6 and $c=\mytxt{z}$, 
\begin{align*}
h(u,c)&=h(f(u),c)=h(10, \mytxt{z})=13 \\ 
H(u,c)&=F(u)+H(f(u),c)=F(6)+H(10,\mytxt{z})\\
      &=[\mytxt{a}, \mytxts{b}] + [\mytxts{c}] = [\mytxt{a}, \mytxts{b}, \mytxts{c}]
\end{align*}
\end{example}

\subsection{Tokenization and its Correctness}
\label{sec:tokenization_correctness}
In this section, we show that Algorithm~\ref{alg:inference} correctly computes MaxMatch $M(w)$ (Definition~\ref{def:maxmatch}) following Lemma~\ref{thm:ground}-\ref{thm:ground_uc_induction}.\footnote{Assume that the original input string $w$ is not the suffix indicator $\mysi$ itself. See Section~\ref{sec:inference}.}

Given string $s$, if call \textsc{MatchLoop}($s$, 0) (Algorithm~\ref{alg:inference}), lines~\ref{alg:inference:failurewhilebegin}-\ref{alg:inference:goto} compute $h(u, s[i])$ and $H(u, s[i])$ based on Lemma~\ref{thm:ground_uc_induction}, while lines~\ref{alg:inference:step_start}-\ref{alg:inference:iplus1} 
compute $g(s)$ and $G(s)$ incrementally based on Lemma~\ref{thm:induction}.
In particular, if $g(s) \neq \nulls$, the resulted 
\emph{tokens} and $u$ are $G(s)$ and $g(s)$, respectively.

We now prove the correctness of Algorithm~\ref{alg:inference} based on Lemma~\ref{thm:ground}. There are two cases to discuss. 

If the input $w$ can be tokenized, according to Lemma~\ref{thm:ground}, 
when running \textsc{MatchLoop}($w\mywhitespace$, 0) on the augmented trie, it will return with $u=g(w\mywhitespace)\in \{\gamma_{\mywhitespace}, \gamma_{\mysi\mywhitespace}\} (\neq \nulls)$ and $\emph{tokens}=G(w\mywhitespace)=M(w)$. 
Analogically, if running \textsc{MatchLoop}($w\mywhitespace$, 0) on the original trie, it would follow the same behavior as on the augmented trie until $i=|w|$ and $u \in \{r, r_{\mysi}\}$. Then the function breaks and returns (since $\delta(u, \mywhitespace)=\nulls$ and $f(u)=\nulls$)). The collected \emph{tokens} are the same as $G(w\mywhitespace)$ on the augmented trie, which is equal to MaxMatch $M(w)$. This is returned as the final output in Algorithm~\ref{alg:inference} (line~\ref{alg:inference:return}).

Otherwise, $g(w\mywhitespace{}) = \nulls$ on the augmented trie (Lemma~\ref{thm:ground}). If running on the augmented trie, \textsc{MatchLoop}($w\mywhitespace, 0$) will break at line~\ref{alg:inference:matchloopbreak} when $f(u) = \nulls$, and in the outputs $i<|w|$ or $u \notin \{r, r_{\mysi}\}$. Now, when running \textsc{MatchLoop}($w\mywhitespace, 0$) on the original trie, it would follow the same behavior and return with the same outputs. Therefore, Algorithm~\ref{alg:inference} returns $[\myunk{}]$ as expected (line~\ref{alg:inference:reset}).

\subsection{Precomputation and its Correctness}

Algorithm~\ref{alg:init} precomputes failure links $f(\cdot)$ and failure pops $F(\cdot)$ based on the following lemma.

\begin{lemma}
\label{thm:induction-f}
The following process correctly computes $f(v),F(v)$ for any trie node $v$.
If $v\in \{r, r_{\mysi} \}$, $f(v)=\nulls$, $F(v)=\myemptylist$ ( Definition~\ref{def:failure}).

Otherwise, let $u$ be the parent of $v$ and $c$ be the label from $u$ to $v$ (i.e., $\delta(u,c)=v$), we have:

\begin{equation}
\label{equ:failure_induction}
    \begin{bmatrix}
        f(v) \\ F(v)
    \end{bmatrix}
    \!=\!
    \begin{cases}
        \!\begin{bmatrix}
            r_{\mysi} \\ \big[\chi_v\big]
        \end{bmatrix} 
        & {\normalfont \text{if }} \chi_v \in V, \\[4mm]
        \!\begin{bmatrix}
            h \big(f(u), c \big) \\
            F(u) \!+\!H \big(f(u),c \big)
        \end{bmatrix} \!\!\!
        & {\normalfont \text{otherwise.}} \\
    \end{cases}
\end{equation}
\end{lemma}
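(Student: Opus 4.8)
The plan is to prove Lemma~\ref{thm:induction-f} by relating the definition of $f(v), F(v)$ (Definition~\ref{def:failure}, Equation~\ref{equ:failure}) to the One-Step MinPop Matching recursion already established in Lemma~\ref{thm:ground_uc_induction}. The base case $v \in \{r, r_{\mysi}\}$ is immediate: here $\chi_v$ is $\varepsilon$ or $\mysi$, so $p_{\chi_v} = \varepsilon$ by Definition~\ref{def:pw}, and the first branch of Equation~\ref{equ:failure} gives $f(v) = \nulls$, $F(v) = \myemptylist$. So the real content is the inductive step for a node $v$ with parent $u$ and edge label $c$ (so $\delta(u,c) = v$ and $\chi_v = \chi_u c$).

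First I would dispose of the easy sub-case $\chi_v \in V$. When $\chi_v \in V$, we have $p_{\chi_v} = \chi_v$ and $q_{\chi_v} = \mysi$ by the observation following Definition~\ref{def:qw}. Plugging into the second branch of Equation~\ref{equ:failure} gives $f(v) = g(\mysi) = r_{\mysi}$ and $F(v) = [\chi_v] + G(\mysi) = [\chi_v]$, since $G(\mysi) = \myemptylist$ (Definition~\ref{def:grounding}, as $\mysi$ is on the trie). This matches the first branch of Equation~\ref{equ:failure_induction}.

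The heart of the argument is the ``otherwise'' branch, where $\chi_v \notin V$. Let $w = \chi_u$, so $\chi_v = wc$. The key observation is that $f(v)$ and $F(v)$ are, by Definition~\ref{def:failure}, exactly the one-step MinPop quantities $h(u', c)$ applied at the parent: I would show that Equation~\ref{equ:failure} with $\chi_v = wc$ reproduces precisely the right-hand side of Equation~\ref{equ:ground_uc_induction} with node argument $u$. Concretely, since $\chi_v = wc \notin V$, we have $\gamma_{wc} = \nulls$, hence by Lemma~\ref{thm:pwc} $p_{wc} = p_w$ and $q_{wc} = q_w c$. I would then compute: if $p_{wc} = \varepsilon$ both Equation~\ref{equ:failure} and the third branch of Equation~\ref{equ:ground_uc_induction} collapse to $\nulls$/$\myemptylist$ (using $f(u) = \nulls$, $F(u) = \myemptylist$ in that degenerate case); otherwise, starting from the second branch of Equation~\ref{equ:failure}, I substitute $q_{wc} = q_w c$ and apply Lemma~\ref{thm:induction} to $g(q_w c)$, then identify $g(q_w) = f(u)$ and $[p_w] + G(q_w) = F(u)$ via Equation~\ref{equ:failure}, arriving at $f(v) = h(f(u), c)$ and $F(v) = F(u) + H(f(u), c)$. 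This is exactly the claimed recurrence.

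The main obstacle I anticipate is bookkeeping the interlocking definitions rather than any deep difficulty: one must carefully track that $\chi_v = \chi_u c$, that $p, q$ behave as in Lemma~\ref{thm:pwc} under appending $c$, and that the parent's failure data $f(u), F(u)$ coincides with the MinPop data $g(q_w), [p_w]+G(q_w)$ on which Lemma~\ref{thm:induction} and Lemma~\ref{thm:ground_uc_induction} are stated. Once the substitutions $g(q_w) = f(u)$ and $[p_w] + G(q_w) = F(u)$ are justified directly from Equation~\ref{equ:failure}, the recurrence in Equation~\ref{equ:failure_induction} follows by matching it term-by-term against Equation~\ref{equ:ground_uc_induction}, so essentially this lemma is the specialization of Lemma~\ref{thm:ground_uc_induction} to the case $w = \chi_u$, with the degenerate $p_{wc} = \varepsilon$ case handled separately.
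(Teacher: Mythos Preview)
Your approach is correct and matches the paper's proof essentially step for step: both dispose of the $\chi_v \in V$ case directly from the definitions, then for $\chi_v \notin V$ use Lemma~\ref{thm:pwc} to get $p_{wc}=p_w$, $q_{wc}=q_wc$, split on whether $p_{wc}=\varepsilon$, and in the nondegenerate case identify $f(u)=g(q_w)$ and $F(u)=[p_w]+G(q_w)$ from Equation~\ref{equ:failure} so that Lemma~\ref{thm:induction} yields the recurrence. One small slip to fix: you write ``since $\chi_v = wc \notin V$, we have $\gamma_{wc} = \nulls$,'' but in fact $\gamma_{wc} = v \neq \nulls$ (the string $wc$ \emph{is} on the trie, at node $v$, even though it is not in $V$); fortunately Lemma~\ref{thm:pwc} only requires $wc \notin V$, so simply drop the spurious $\gamma_{wc}=\nulls$ claim and the argument goes through unchanged.
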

\begin{proof}
We just need to prove the second case in Equation~\ref{equ:failure_induction}. Let $w=\chi_u$, hence $\chi_v=\chi_uc=wc$. Since $wc \notin V$, we have $p_{wc}=p_w$ and $q_{wc}=q_wc$ (Lemma~\ref{thm:pwc}). 

If $p_{\chi_v}=p_{\chi_u}=\varepsilon$, $f(v)=f(u)=h(f(u),c)=\nulls$ and $F(v)=F(u)=H(f(u),c)=\myemptylist$, we have
\begin{align*}
\mybm{f(v)}{F(v)} & = \mybm{\nulls}{\myemptylist} =  \mybm{h(f(u),c)}{F(u)+H(f(u),c)}.
\end{align*}

Otherwise, since $p_{wc}=p_w$, $q_{wc}=q_wc$. Hence,
\begin{align*}
\mybm{f(v)}{F(v)} & = \mybm{g(q_{w}c)}{[p_{wc}]+G(q_{w}c)}  & \text{(Eq.~\ref{equ:failure})}\\
&= \mybm{h(f(u),c)}{p_w + G(q_w) + H(f(u),c)} & \text{(Eq.~\ref{equ:ground_uc_induction})}\\
&=  \mybm{h(f(u),c)}{F(u)+H(f(u),c)}& \text{(Eq.~\ref{equ:failure})}
\end{align*}

\end{proof}

\subsection{\lmmshort{} as a Finite-State Transducer (FST)}
\label{sec:appendix_fst}

In Section~\ref{sec:algo-discussions} we discussed that \lmmshort{} can be turned into a finite-state transducer (FST) by precomputing the transition function $\delta'(u,c)$ and the output function $\sigma'(u,c)$ to eliminate the failure transitions. 
As aforementioned in Section~\ref{sec:one-step-minpop-matching}, 
$\delta'(u,c)$ and $\sigma'(u,c)$ are essentially one-step MinPop matching:
\begin{align}
\mybm{\delta'(u,c)}{\sigma'(u,c)} & = \mybm{h(u,c)}{H(u,c)}  
\end{align}

If we precompute and store $\delta'(u,c)$ and $\sigma'(u,c)$, i.e.\ $h(u,c)$ and $H(u,c)$, Algorithm~\ref{alg:inference} can be rewritten as Algorithm~\ref{alg:inferencefst} (according to Lemma~\ref{thm:induction}), where the differences are lines~\ref{alg:inferencefst:step_start}-\ref{alg:inferencefst:loopend} in bold.

\begin{algorithm}[htb!]  
\DontPrintSemicolon
\mysetind
\SetKwProg{Fn}{Function}{:}{}
\nonl
\Fn{\textsc{\lmmshort{}}{\normalfont (}$w${\normalfont )}}{
  \text{tokens}, $u$, $i$ $\leftarrow$ \textsc{MatchLoop}$(w\mywhitespace{}, 0)$ \label{alg:inference2:call} \;
  \If{$i<|w|$ \mykw{or} $u \notin \{r, r_{\mysi}\}$ \label{alg:inference2:check_returns}}{tokens $\leftarrow [\myunk]$ \label{alg:inference2:reset}}
    \ElseIf{
    $
    u = r_{\mysi}  
      \mykw{ and } |\text{tokens}|=0 $ \label{alg:inference2:check_edge_case}}{
        tokens $\leftarrow \textsc{OriginalWordPiece}(\mysi)$ \label{alg:inference2:edge_case}
    }
  \Return tokens \label{alg:inference2:return}
}
\nonl
\SetNlSty{bfseries}{\color{black}}{}
\Fn{\textsc{MatchLoop}{\normalfont (}$s$, $i${\normalfont )}}{
  $u, \text{tokens}$ $\;\leftarrow\;$ $r, \myemptylist$ \;
  \While{$\mathbf{i < |s|}$ \mykw{and} $\mathbf{u \neq \nulls}$ \label{alg:inferencefst:step_start}}{
        \textbf{tokens} $\mathbf{\leftarrow}$ \textbf{\textsc{Extend}(tokens}, $\mathbf{H(u, s[i]))}$ \label{alg:inferencefst:append} \;
    $\mathbf{u \leftarrow h(u, s[i])}$ \label{alg:inferencefst:goto} \;
    $\mathbf{i \leftarrow i+1}$ \label{alg:inferencefst:loopend} \;
  }
  \Return tokens, $u$, $i$  \label{alg:inferencefst:final_return}
}
\caption{\lmmshort{} as an FST}
\label{alg:inferencefst}
\end{algorithm}

In Algorithm~\ref{alg:inferencefst}, we can see that the failure transitions are eliminated and \lmmshort{} works as an FST. The time complexity is trivially linear.

\end{document}